\definecolor{MapleRed}{RGB}{191,5,5} %
\definecolor{mygray}{gray}{0.97}
\definecolor{promptblue}{rgb}{0,0.4,0.8}
\definecolor{promptgreen}{rgb}{0.3,0.6,0.3}
\definecolor{promptpurple}{rgb}{0.6,0,0.6}
\lstdefinestyle{prompt}{
    backgroundcolor=\color{mygray},
    basicstyle=\normalfont,
    breaklines=true,
    columns=fullflexible,
    frame=single,
    keepspaces=true,
    numbers=left,
    numberstyle=\tiny\color{gray},
    keywordstyle=\color{blue},
    commentstyle=\color{green!60!black},
    stringstyle=\color{purple},
    escapeinside={(*@}{@*)},
    breakatwhitespace=false,
    moredelim=[is][\color{red}\bfseries]{<<!}{!>>},       %
    moredelim=[is][\color{promptblue}]{<<*}{*>>},         %
    moredelim=[is][\colorbox{yellow!30}]{<bg>}{</bg>},    %
    moredelim=[is][\color{promptgreen}]{<<\#\#}{\#\#>>},  %
    moredelim=[is][\color{promptpurple}\itshape]{<<@}{@>>} %
}
\title{\ours \textcolor{MapleRed}{\faIcon{canadian-maple-leaf}}: Multi-Agent Post-Co-Training for Collaborative \\Large Language Models with Reinforcement Learning}
\author{Chanwoo Park$^\heartsuit$\thanks{This work was initiated during an internship at Amazon AGI.}\hspace{0.9cm}Seungju Han$^\spadesuit$\hspace{0.9cm}Xingzhi Guo$^\diamondsuit$\\
\textbf{\hspace{0cm}Asuman Ozdaglar$^\heartsuit$\hspace{0.66cm}Kaiqing Zhang$^\clubsuit$\hspace{0.48cm}
Joo-Kyung Kim$^\diamondsuit$} \\[0.1cm]
$^\heartsuit$MIT \hspace{0.8cm}$^\spadesuit$Stanford\hspace{0.8cm}$^\diamondsuit$Amazon \hspace{0.8cm}$^\clubsuit$UMD\\
\texttt{{\{cpark97,asu\}@mit.edu}, seungju@stanford.edu}, \\
\texttt{\{guxingzh, jookyk\}@amazon.com, kaiqing@umd.edu}
}
      \newcommand{\cp}[1]{\textcolor{purple}{\bf\small [#1 --CP]}}
    \newcommand{\kz}[1]{\textcolor{green}{\bf\small [#1 --KZ]}}
  \newcommand{\jk}[1]{\textcolor{brown}{\bf\small [#1 --JK]}}
  \newcommand{\jacky}[1]{\textcolor{red}{\bf\small [#1 --Jacky]}}
  \newcommand{\sj}[1]{\textcolor{red}{\bf\small [#1 --SJ]}}
\newcommand{\safevspace}[1]{\vspace{0mm}}
  \newcommand{\cp}[1]{\textcolor{purple}{\bf\small}}
  \newcommand{\jk}[1]{\textcolor{brown}{\bf\small}}
  \newcommand{\jacky}[1]{\textcolor{red}{\bf\small}}
  \newcommand{\sj}[1]{\textcolor{red}{\bf\small}}
      \newcommand{\kz}[1]{}
\newcommand{\safevspace}[1]{\vspace{0mm}}
\newcommand{\oursspace}{\textbf{\texttt{MAPoRL}} }
\newcommand{\ours}{\textbf{\texttt{MAPoRL}}}
\newcommand{\ourstwo}{\textbf{\texttt{MAPoRL}}}
\begin{document}
\maketitle
\begin{abstract}
Leveraging multiple large language models (LLMs) to build collaborative  {\it multi-agentic} workflows has demonstrated significant potential. However, most previous studies focus on {\it prompting}  the out-of-the-box LLMs, relying on their innate capability for collaboration, which may not improve LLMs' performance as shown recently.
In this paper, we introduce a new {\it post-training} paradigm {\bf \ours} ({\bf M}ulti-{\bf A}gent {\bf P}ost-c{\bf o}-training for collaborative LLMs with {\bf R}einforcement {\bf L}earning), to explicitly elicit the collaborative behaviors and further unleash the power of multi-agentic LLM frameworks. 
In {\bf \ours}, multiple LLMs first generate their own responses independently and engage in a multi-turn discussion to collaboratively improve the final answer. 
In the end,  a  {\bf \ours} verifier evaluates both the answer and the discussion, by assigning a score that verifies the correctness of the  answer,  while  adding incentives to encourage corrective and persuasive discussions. The score serves as the co-training reward, and is then maximized through multi-agent RL.
Unlike existing LLM post-training paradigms, {\bf \ours} advocates the {\it co-training} of multiple LLMs together using {\it RL} for better generalization. Accompanied by analytical insights, our experiments demonstrate that training individual LLMs alone is insufficient to induce effective collaboration. In contrast, multi-agent co-training can boost  the collaboration performance across benchmarks, with generalization to unseen domains. The code is available at \url{https://github.com/chanwoo-park-official/MAPoRL}.

\end{abstract}
\safevspace{-4mm}
\section{Introduction}

\safevspace{-2.2mm}
Recent advances in large language models (LLMs) have highlighted their potential for collaboration, particularly within the {\it multi-agentic} framework  \citep{du2023improving, li2023camel, kim2024adaptive}. The shift from single-agent to multi-agent systems introduces new dimensions and challenges in enabling effective collaboration among LLM agents. 
Recent approaches to  multi-LLM  collaboration mostly rely on 
{{\it prompting} {\it pre-trained} models. However,} such approaches struggle with achieving genuine collaboration among the agents. For example, multi-agent debate does not consistently lead to improved performance with additional turns \citep{huang2023large}.

\safevspace{-1.5mm}

This limitation may be somewhat expected -- while LLMs are able to {\it simulate} collaboration procedures, they were  {\it not} explicitly {\it trained} to achieve effective cooperation.
{In theory, it is not hard to imagine that single-agent training is insufficient for collaboration -- an {\it untrained} and {\it non-strategic} opponent can fail to act in a way that promotes  collaboration. 
Instead,} 
achieving collaborative behaviors requires interactive training environments where each agent actively engages with others, and dynamically optimizes the strategy \citep{gagne1974instruction,macy1991learning, hertz2013learning}. Moreover, conventional approaches such as supervised fine-tuning (SFT), as we will show, are inadequate for this purpose, either:  
merely mimicking multi-agent interactions from training data may not lead to effective collaboration. 

\safevspace{-1.5mm}

To develop more effective collaborative agents, we propose {\bf M}ulti-{\bf A}gent {\bf P}ost-c{\bf o}-training for collaborative LLMs with {\bf R}einforcement {\bf L}earning (\textbf{\ours}), a {\it co-training} paradigm for multiple LLMs using multi-agent reinforcement learning (MARL). In \ourstwo, within the  pre-defined frameworks for multi-agent collaboration ({e.g.}, the debate framework \citep{du2023improving}), each agent receives rewards for their responses during collaboration, based on the quality of their answers and interactions. The objective for each agent in  \ourstwo~ is to maximize their own value function, defined as the expected cumulative sum of rewards over the course of the collaboration.

\safevspace{-1.5mm}
To further encourage cooperation in \ourstwo, we incorporate incentives for successful interactions and penalties for collaboration failures, steering the LLMs toward more effective and aligned behaviors. Through a simplified game-theoretic example, we validate the following insights: 1) single-agent training alone is insufficient to produce genuinely cooperative agents, and 2) co-trained agents can reach an equilibrium that exhibits cooperative behavior.

\safevspace{-1.5mm}
To assess the effectiveness of \ourstwo, we conduct  experiments across diverse tasks and evaluation strategies. Specifically, we train multi-agent LLMs for tasks such as mathematical reasoning (GSM8k \citep{cobbe2021training}) and natural language inference (ANLI \citep{nie2019adversarial}), comparing their performance against baseline approaches. Additionally, we evaluate the robustness of our method by testing agents on out-of-domain tasks (e.g., training on a NLI task and evaluating on a math dataset), demonstrating the generalization capabilities of our approach. We also explore the collaboration among agents of varying capabilities, by analyzing the impact of training {\it heterogeneous} LLMs together.

\safevspace{-1.5mm}

To the best of our knowledge, this study is \textit{among the first works to explore the training of multi-LLM  systems as a whole}\footnote{Together with the contemporaneous works \citet{subramaniam2025multiagent} and \citet{zhao2025sirius},  both of which were released within the past month while preparing this paper.
In contrast to \ourstwo, the algorithms therein were based on (iterative) SFT, instead of RL. Also, \citet{motwani2024malt} provided a method to train verifier-generation-refiner system with DPO. },  
using RL, for multi-LLM collaboration.

\begin{figure*}[t]  %
  \centering
  \includegraphics[width=\textwidth]{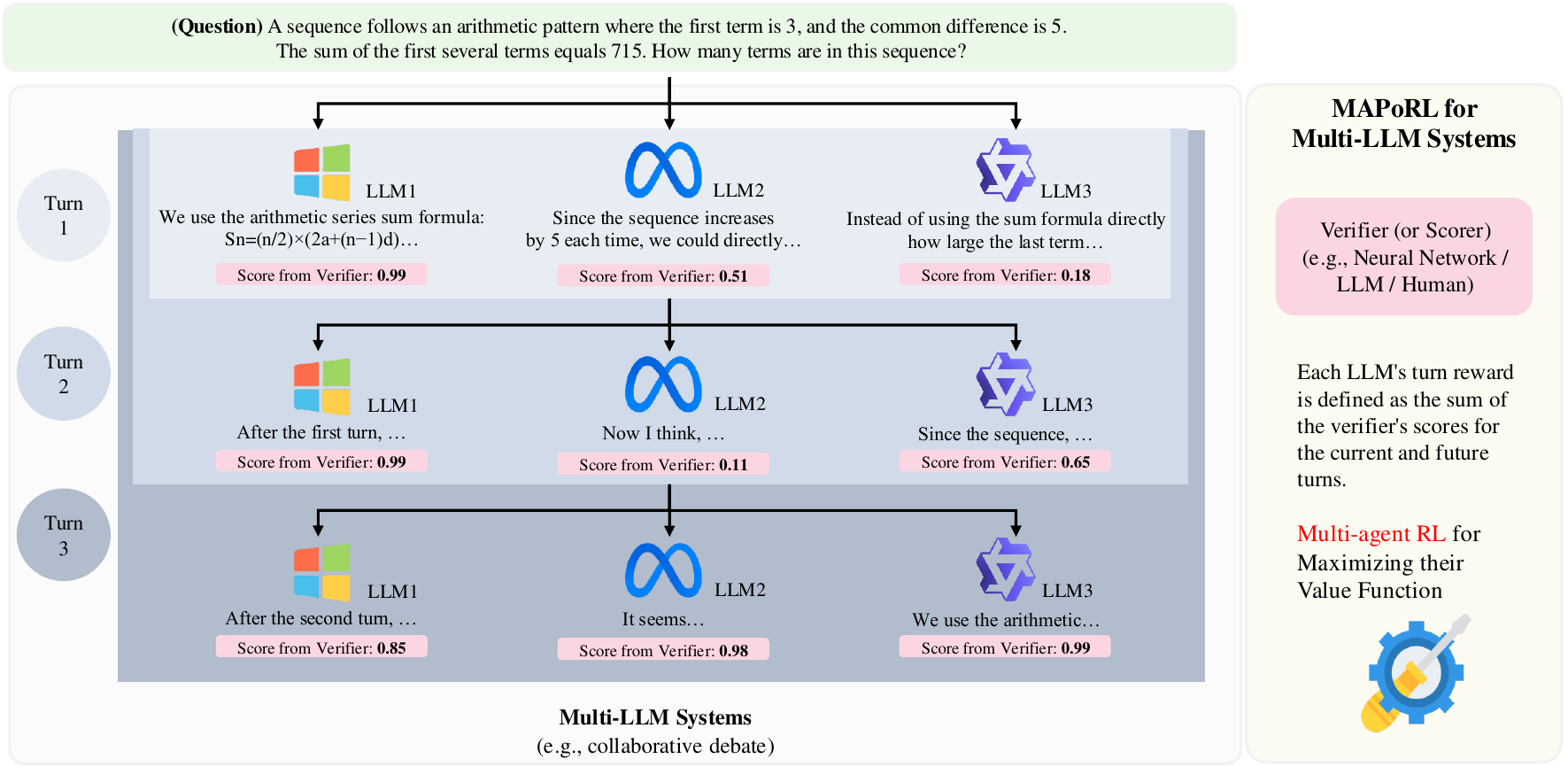}
  \vspace{-5mm}
  \caption{
  \oursspace can be applied to any multi-{LLM}  system with a scorer/verifier. In the illustrated example, it is integrated into a collaborative debate system for mathematical problem-solving. LLMs generate responses based on the multi-agent system pipeline, and a scorer/verifier evaluates their outputs. The reward for each LLM is determined based on these scores, which may include both current and future pipeline evaluations. Multi-Agent RL is employed to maximize each agent's value function.}
  \vspace{-0.35cm}
  \label{fig1:problem-setting}
\end{figure*}

\safevspace{-2.2mm}
\safevspace{-1.5mm}

\safevspace{-2.2mm}
\safevspace{-1.5mm}

\section{Analytical Insights: Collaborate  to Solve Hard Questions}
\safevspace{-2.2mm}
\label{sec:theory}
In this section, we present a simplified model of multi-LLM collaboration and explain (a) why {\it co-training}  multiple LLMs is necessary compared to training a single agent,  and (b) the role of incentives to further enhance collaboration during training. We validate both aspects through experiments in \Cref{sec:exp}.
\safevspace{-2.2mm}
\safevspace{-1.5mm}

\subsection{Problem Setup}
\safevspace{-2.2mm}

We consider questions that inherently require collaboration for a successful solution. For instance, solving complex mathematical problems often requires  collaboration  among  multiple agents \citep{liang2023encouraging, du2023improving}. Beyond mathematics, collaboration can also enhance the performance on tasks related to privacy, factuality, and reliability \citep{feng2025llmdroolsmultillmcollaboration}.
{We model the interaction among LLMs as a repeated game with $T$ turns}. 
For simplicity, we assume that in each turn, each agent chooses between two actions: \textit{Collaborate} ($a_0$) or \textit{Act Independently} ($a_1$). For a given question $q$, we define \(C(q)\) as a non-negative integer representing the \textit{collaboration threshold}. The agents achieve \textit{collaborative synergy} if, over the course of the $T$-turn interactions, the total number of collaborative actions (\(a_0\)) of all the agents meets or exceeds \(C(q)\).
When collaborative synergy is achieved, each agent receives a reward \(R_{\text{syn}}(q) = 1\), representing a (near-)guaranteed correct solution. Prior to achieving synergy, agents receive rewards based on their chosen actions: a reward of \(R_{\text{col}}(q)\) for choosing to collaborate \((a_0)\) and \(R_{\text{ind}}(q)\) for acting independently \((a_1)\), where \(R_{\text{col}}(q) < R_{\text{ind}}(q)\) (see \Cref{rmk:rationale} for a detailed  justification on the setup). This reward structure creates a tradeoff between short-term accuracy and long-term collaborative success. {This  setup is related to the classical Coordination Games  \cite{cooper1999coordination} in game theory if $R_{\text{syn}}$ is large}. We introduce a new collaboration threshold and synergy mechanism that shapes the transition from independent actions to collaborative behavior {in {\it multiple turns}}, to better model the collaboration procedure among multiple LLMs. 

\vspace{-3mm}
\begin{remark}[Rationale Behind the Setup]
\label{rmk:rationale}
\upshape
This formalization captures several key aspects of complex problem-solving dynamics. Choosing to collaborate ($a_0$) represents contributing \textit{exploratory ideas} or \textit{partial solutions}. While these contributions have a lower immediate probability of correctness \(R_{\text{col}}(q)\), they are essential building blocks towards the complete solution. Acting independently $(a_1)$ represents using conventional approaches that may yield a higher \textit{immediate probability} of correctness \(R_{\text{ind}}(q)\), but may contribute less to solving particularly challenging problems. The collaboration threshold \(C(q)\) represents the minimum amount of collaboration efforts and idea generation needed to solve complex problems. Once this threshold is reached (i.e., achieving collaborative synergy), the agents can combine their insights to solve the challenging problem, with a higher  reward \(R_{\text{syn}}(q)\).
\end{remark}
\vspace{-3mm}

\safevspace{-3mm}

\subsection{Analytical Observations}
\safevspace{-1.5mm}

To provide intuition for why co-training is necessary and single-agent training may be inadequate, 
we analyze the simplest case with $T=2$ and $C(q) = 1$ to illustrate the fundamental differences between single and multi-agent training. We provide formal statements and proofs in \Cref{appendix_defproof}.

\begin{restatable}{observation}{claimsingle}
\label{claim:1}
Suppose that the opponent selects action \( a_0 \) with probability \( \pi(q) \) for each question \( q \). Then, the optimal strategy for the agent is as follows: if  $(R_{\text{syn}}(q) - R_{\text{ind}}(q))\pi(q) \geq R_{\text{ind}}(q) - R_{\text{col}}(q)$, then the optimal strategy for question \( q \) is to collaborate (\( a_0 \)). Otherwise, the optimal strategy is to act independently (\( a_1 \)).
\end{restatable}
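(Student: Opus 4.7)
My plan is a direct expected-utility comparison of the agent's two pure actions, followed by a convexity argument to extend the conclusion to all mixed strategies.

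First, I would compute $EU(a_0)$, the agent's expected cumulative reward when it plays $a_0$. Since $C(q) = 1$, the agent's own collaborative move already meets the threshold, so collaborative synergy is guaranteed once $a_0$ is played. The accounting then combines the immediate per-action reward $R_{\text{col}}(q)$ with the synergy reward $R_{\text{syn}}(q)$ realized in the remaining turn, conditional on the opponent's action.

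Second, I would compute $EU(a_1)$ analogously: the immediate reward is $R_{\text{ind}}(q)$, and synergy is triggered only if the opponent cooperates (an event of probability $\pi(q)$); otherwise the agent continues to accumulate action-based rewards.

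Third, taking the difference and collecting terms, I expect to obtain a decomposition of the form
\[
EU(a_0) - EU(a_1) \;=\; \pi(q)\bigl(R_{\text{syn}}(q) - R_{\text{ind}}(q)\bigr) \;-\; \bigl(R_{\text{ind}}(q) - R_{\text{col}}(q)\bigr),
\]
whose nonnegativity is precisely the threshold condition in the statement. Intuitively, the first term is the expected upside from reaching collaborative synergy together with the opponent, while the second is the immediate opportunity cost of forgoing the higher-valued independent action. Since $EU$ is linear in the agent's own mixing probability, the optimum must be attained at a pure action, which yields the dichotomy in the claim.

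\textbf{Main obstacle.} The trickiest part is keeping the turn-by-turn reward bookkeeping consistent---making sure, for each strategy profile, that the pre-synergy action rewards $R_{\text{col}}(q), R_{\text{ind}}(q)$ accumulate only up to the moment synergy is first achieved, at which point the post-synergy reward $R_{\text{syn}}(q)$ takes over. Once this accounting is pinned down, the algebraic rearrangement leading to the stated threshold is routine.
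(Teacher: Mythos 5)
Your overall strategy (expected-utility comparison of the two pure actions, then linearity in the mixing probability) is the same as the paper's, but the way you propose to account for when synergy is triggered would not produce the stated threshold. You assert that with $C(q)=1$ ``the agent's own collaborative move already meets the threshold, so collaborative synergy is guaranteed once $a_0$ is played,'' and symmetrically that under $a_1$ synergy is triggered whenever the opponent cooperates. If you carry that accounting through, you get
\[
EU(a_0) = R_{\text{col}}(q) + R_{\text{syn}}(q), \qquad
EU(a_1) = R_{\text{ind}}(q) + \pi(q)\,R_{\text{syn}}(q) + (1-\pi(q))\,R_{\text{ind}}(q),
\]
whose difference is $(1-\pi(q))\bigl(R_{\text{syn}}(q)-R_{\text{ind}}(q)\bigr) - \bigl(R_{\text{ind}}(q)-R_{\text{col}}(q)\bigr)$ --- a condition with $1-\pi(q)$ where the statement has $\pi(q)$. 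The decomposition you ``expect to obtain'' therefore does not follow from the bookkeeping you describe; it only comes out if synergy in the second turn accrues exactly when \emph{both} agents chose $a_0$ in the first turn (equivalently, the threshold counts joint $(a_0,a_0)$ events, which is how the paper's payoff matrix and its toy-experiment state $(\text{turn},\texttt{count})$ are set up). Under that semantics $EU(a_0) = \pi(q)\bigl(R_{\text{col}}(q)+R_{\text{syn}}(q)\bigr) + (1-\pi(q))\bigl(R_{\text{col}}(q)+R_{\text{ind}}(q)\bigr)$ and $EU(a_1) = 2R_{\text{ind}}(q)$, which gives the claimed inequality.

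A secondary point: the paper first argues by backward induction that, absent synergy, the optimal second-turn action is $a_1$ (since $R_{\text{col}}(q) < R_{\text{ind}}(q)$), and that once synergy is achieved the second-turn action is irrelevant; this is what justifies the $R_{\text{ind}}(q)$ terms appearing as the continuation payoffs in the comparison above. You flag this bookkeeping as the ``main obstacle'' but leave it unresolved; pinning it down is exactly the step that fixes which of the two thresholds you end up with, so it cannot be deferred as routine.
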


\safevspace{-2mm}

This shows the dependence of the agent's strategy on the opponent's behavior. If the opponent is {\it not collaborative} enough  and {\it non-strategic}, then \( \pi(q) \) will be small, leading the trained model to behave in a  \textit{non-collaborative} way.

\begin{observation}[Informal]  
\label{claim:2}
If both agents are trained to maximize their individual cumulative rewards with an entropy regularization term scaled by $\tau$, then as $\tau \to 0$, they will collaborate if:

\vspace{-6.2mm}
{\small
\[
R_{\text{syn}}(q) > \max( 3R_{\text{col}}(q) - 2R_{\text{ind}}(q), 2R_{\text{ind}}(q) - R_{\text{col}}(q)).
\]
}
\end{observation}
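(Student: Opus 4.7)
The plan is to perform backward induction on the 2-turn, $C(q) = 1$ game, reduce it to a one-shot $2 \times 2$ matrix game on the turn-1 actions, and then compute the $\tau \to 0$ limit of the symmetric entropy-regularized equilibrium.

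First, I would resolve turn 2 by backward induction. If at least one agent played $a_0$ in turn 1, synergy is already achieved and each agent's turn-2 reward equals $R_{\text{syn}}$ irrespective of action, so turn-2 actions are payoff-irrelevant. If both agents played $a_1$ in turn 1, turn 2 reduces to a trivial choice where $R_{\text{ind}} > R_{\text{col}}$ and synergy reached in turn 2 brings no future payoff, so each agent's soft-optimal turn-2 policy concentrates on $a_1$ as $\tau \to 0$. This collapses the game to a symmetric $2 \times 2$ matrix on the turn-1 actions with payoffs $(R_{\text{col}} + R_{\text{syn}}, R_{\text{col}} + R_{\text{syn}})$, $(R_{\text{col}} + R_{\text{syn}}, R_{\text{ind}} + R_{\text{syn}})$, $(R_{\text{ind}} + R_{\text{syn}}, R_{\text{col}} + R_{\text{syn}})$, and $(2R_{\text{ind}}, 2R_{\text{ind}})$ on the four pure profiles --- the structure of a two-player volunteer's dilemma.

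Second, I would characterize the symmetric logit (QRE) fixed point of the reduced game. Writing $p$ for the symmetric probability on $a_0$, the Q-gap is $Q(a_0) - Q(a_1) = (R_{\text{col}} + R_{\text{syn}} - 2R_{\text{ind}}) - p(R_{\text{syn}} - R_{\text{ind}})$, and the fixed-point equation $p = \sigma((Q(a_0) - Q(a_1))/\tau)$ has a unique interior solution (by intermediate value) whose $\tau \to 0$ limit is $p^\ast = (R_{\text{col}} + R_{\text{syn}} - 2R_{\text{ind}})/(R_{\text{syn}} - R_{\text{ind}})$. The agents collaborate in the limit (interpreted as $p^\ast > 0$) exactly when $R_{\text{col}} + R_{\text{syn}} > 2R_{\text{ind}}$, rearranging to the dominant inequality $R_{\text{syn}} > 2R_{\text{ind}} - R_{\text{col}}$. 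The secondary inequality $R_{\text{syn}} > 3R_{\text{col}} - 2R_{\text{ind}}$ I would derive from a separate deviation check against an over-collaborative profile (for instance, both agents playing $a_0$ in both turns rather than free-riding on one volunteer), and the max in the statement then captures the binding inequality across both regimes.

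The hard step will be the entropy-regularized multi-agent analysis itself. Unlike single-agent RL, existence, uniqueness, and stability of the symmetric QRE as $\tau \to 0$ are not automatic --- they require monotonicity of the soft Bellman operator restricted to the symmetric slice of the policy space, along with care when the Q-gap crosses zero. A secondary subtlety is showing that independent soft-policy updates actually converge to the symmetric interior fixed point rather than being attracted to one of the asymmetric pure Nash equilibria of the reduced volunteer's dilemma, and pinning down the precise meaning of ``collaborate'' so that the threshold in the claim is tight in both directions.
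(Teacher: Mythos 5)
Your backward-induction step and the general plan (collapse turn 2, reduce to a $2\times 2$ matrix game on the turn-1 actions, take the $\tau\to 0$ limit of the logit equilibrium) match the paper, but your reduced matrix is not the one the paper analyzes, and this changes the game class and hence the whole argument. In the paper's reduction (set up already in the proof of Observation~\ref{claim:1}), the off-diagonal profile $(a_0,a_1)$ pays $(R_{\text{col}}(q)+R_{\text{ind}}(q),\,2R_{\text{ind}}(q))$: a lone collaborator does \emph{not} trigger the synergy bonus. Under the condition $R_{\text{syn}}(q) > 2R_{\text{ind}}(q)-R_{\text{col}}(q)$ this makes the reduced game a symmetric \emph{coordination game} (stag hunt) with two symmetric pure equilibria, and the proof is then a direct appeal to \citet[Theorem~1]{zhang2016quantal}: as $\tau\to 0$ the regularized equilibrium selects the \emph{risk-dominant} equilibrium, and $(a_0,a_0)$ is risk-dominant iff $(R_{\text{col}}+R_{\text{syn}})+(R_{\text{col}}+R_{\text{ind}}) > 4R_{\text{ind}}$, i.e.\ $R_{\text{syn}} > 3R_{\text{ind}}-2R_{\text{col}}$, which is where the second branch of the $\max$ comes from (note the informal statement appears to have the subscripts transposed relative to the proof). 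You instead awarded $R_{\text{syn}}$ on the off-diagonal, turning the game into a volunteer's dilemma whose symmetric QRE limit is the interior mixed equilibrium $p^\ast$; that is a different game, your notion of ``collaborate'' ($p^\ast>0$) is much weaker than convergence to the pure profile $(a_0,a_0)$, and it only recovers the $2R_{\text{ind}}-R_{\text{col}}$ branch. Your plan to get the other branch from ``a separate deviation check against an over-collaborative profile'' does not correspond to any concrete computation and would not produce the risk-dominance inequality.

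Your literal reading of $C(q)=1$ (one collaborative action suffices for synergy) is a defensible interpretation of the prose, so the mismatch is partly the paper's ambiguity; but the statement being proved is tied to the coordination-game matrix, so the proof has to use it. Relatedly, the convergence and equilibrium-selection worries you flag at the end (monotonicity of the soft Bellman operator, attraction to asymmetric equilibria) are artifacts of the volunteer's-dilemma structure; in the coordination game the only selection issue is between the two symmetric pure equilibria, and that is exactly what the cited risk-dominance result resolves, so no further fixed-point analysis is needed.
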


\safevspace{-1.9mm}
{\Cref{claim:2} can be proved by adapting the results of \citet{zhang2016quantal}, and transforming our setup with \(T=2\) into a matrix game.} This observation implies that when both agents optimize their own {cumulative} reward, they will naturally choose collaboration when \( R_{\text{syn}}(q) \) is high enough, which emphasizes the importance of additional incentives to promote collaborative synergy. Due to this observation, in \Cref{sec:incentive-coll}, we incentivize collaboration by providing a higher \( R_{\text{syn}}(q) \).

\safevspace{-2mm}

\subsection{Toy Experiments with \(T=10, 20\) Turns}
\label{ssec:exp-toy}
We illustrate the benefit of \textit{jointly optimized} (multi-agent) policies over {those obtained from a} \textit{single-agent} approach in our setting, with longer \(T=10, 20\) turns.  Each question \(q\) is associated with the  rewards \(R_{\text{col}}(q)\), \(R_{\text{ind}}(q)\), and \(R_{\text{syn}}(q)\), along with a collaboration threshold \(C(q)\).  Further details on the choices of these quantities be found in \Cref{appendix:toy}.

{We first consider a \textit{single} agent interacting with a fixed opponent whose probability of collaborating, \(\pi(q)\), is set at \(\{0.5,\,0.6,\,0.7\}\). Despite the relatively high likelihood of collaboration from the opponent, the single-agent policy, which optimizes its response to the fixed opponent, does not result in effective collaboration (\Cref{figs:collaboration_probability}). Instead of learning to strategically engage with the opponent’s behavior, the single-agent policy, which follows a best-response strategy to the fixed opponent, tends to \textit{avoid} collaboration, highlighting the limitations of a single-agent framework when facing a \emph{fixed}, \emph{non-strategic} opponent.}

Next, we consider two \textit{jointly optimizing} (multi-agent) learners who adapt their policy based on the other's actions. Concretely, we compute an  entropy-regularized Nash equilibrium for \(\tau=0.1\) via backward induction. 
As shown in \Cref{figs:collaboration_probability} (solid red curves), the \textit{jointly} optimized agents coordinate with significantly higher collaboration rates. Intuitively, this is because their learning process, which fosters high collaboration at larger $t$, shapes their strategic behavior from the very beginning, leading to increased cooperation even at the first turn. 
These toy results underscore the importance of \textit{strategic interactions}: when both agents adapt their policies simultaneously, they learn to be more collaborative,  despite the temptation of short-term independent rewards. 
\vspace{-2mm}

\section{{Post-Co-Training} with MARL}
\label{sec:MARL}
\vspace{-2mm}
\safevspace{-2mm}
{We now provide an overview of our new paradigm of \textit{Multi-Agent Post-Co-Training with RL (\ours)} for LLM collaboration. 
In our framework, each agent's response per turn is evaluated by a verifier, which assigns a score reflecting the answer's validity. The reward 
is defined as the (weighted) sum of the verifier's scores from the current turn and those from the future turns, thus capturing both the immediate feedback and the projected long-term impact of the agent's response. The agents’ policies are updated using multi-agent PPO for each turn, ensuring that the learning process incorporates both the performance {in the current turn} and the influence of the anticipated collaborative interactions {in the future turns}.} 

\vspace{-2mm}

\subsection{Multi-Agent System - Collaborative Debate Formulation}
{We follow the collaborative debate system proposed by \citet{du2023improving}as an example of our multi-LLM system in the experiments.}  Note that, \ourstwo~can be applied to other multi-LLM systems as long as each agent’s response can be evaluated—for example, by a verifier that assigns a score reflecting the quality or correctness of the response.  The reward for each agent is then determined by summing the verifier scores of all the responses influenced by that agent throughout the {multi-agent interaction} process.
Assume we have a collaborative debate system that runs for \( T \) turns and involves \( A \) agents.  In each turn, an LLM must determine its next response based on the history of its own answers as well as the answers provided by other LLM agents. Let \( q \) be the given question, and let \( s_{ti} \) denote the solution provided by agent \( i \) at turn \( t \).  We inductively express the solution \( s_{(t+1)i} \) as follows:
\safevspace{-0.3cm}

{
\vspace{-0.5cm}
\small
\begin{align*}
s_{1i} = \text{LLM}_i\left(q\right), \,
s_{(t+1)i} = \text{LLM}_i\left(q \oplus_{j \in [A], t' \in [t]} s_{t'j}\right)  
\end{align*}}
\safevspace{-8mm}
\vspace{-0.5cm}

\noindent where \( \oplus \) denotes token-wise concatenation, $1 \leq t \leq T-1$ and \( \text{LLM}_i(s) \) represents the function of inputting {prompt}  \( s \) into the \( \text{LLM}_i \) {which outputs logits over its token space, followed by sampling a token based on these logits.} If $A = 1$, then this setup is equivalent to that of  self-correcting LMs~\cite{madaan2024self}. Now, we define \(\btheta = (\theta_{ta})_{t\in[T], a \in [A]}\), where \(\theta_{ta}\) represents the parameters of the \(a\)-th agent at turn \(t\). We denote LLM parameterized by  $\theta_{ta}$ as \(LLM_{\theta_{ta}}\).

Next, to implement \ourstwo, we define the reward function for the multi-agent RL formulation,   using the verifier score. {Due to space constraints, we here introduce the \textit{Influence-aware Verification Rewards}, and defer other choices of the reward functions that we will use in the experiments to \Cref{appendix_reward}.} 

\vspace{-2mm}
\begin{definition}[Influence-aware Verification Reward]
\label{def:infl-veri}
The influence-aware verification reward function $R_{\btheta}(q, s_{ta})$ is defined as 
\begin{align*}
    &R_{\btheta}(q, s_{ta}) = \EE \bigg[\frac{1}{\sum_{t' \in [t, T]} \gamma^{t'-t}} \biggl( \text{Verifier}(q, s_{ta}) 
    \\
    &\, + \sum_{t' \in [t+1, T]}\sum_{j \in [A]} \frac{1}{A}\gamma^{t'-t}\text{Verifier}(q, s_{t'j})\biggr) \bigg].  
\end{align*}
\end{definition}

{Here, the expectation arises from the randomness of other agents' answers, which are influenced by the agent's current response, and \(\gamma \in [0,1]\) is a discount factor.}
This reward not only considers the verifier's score for the current solution \( s_{ta} \), but also incorporates the impact of this solution on the future answers of all agents. The term \( \sum_{j \in [A]} \frac{1}{A} \) averages the verifier's scores across all the agents, reflecting the influence that \( s_{ta} \) has on the overall multi-agent system.

\subsection{Multi-Agent RL Formulation}
The reward of each agent, as well as its answer generation, is intertwined with the actions of other agents in the multi-LLM system with the reward function (\Cref{def:infl-veri}). Thus, instead of single-agent RL, we design a multi-agent RL approach. For this paper, we choose multi-agent PPO \citep{yu2022surprising} as a representative multi-agent RL algorithm and instantiate it in the language domain. 

Conventional multi-agent PPO extends PPO to multi-agent settings by training decentralized policies, either with a shared critic or through independent learning, where each agent optimizes its policy based on local observations and rewards. Typically, agents learn independently or with limited coordination mechanisms, such as centralized critics or shared value functions. Our approach adapts multi-agent PPO by defining the state as the concatenation of the multi-agent interaction history, allowing agents to condition their responses on past interactions. Additionally, we introduce reward structures that are aligned but not fully identical across agents, encouraging them to fulfill different roles while collectively working toward solving the task.

Since we are solving multi-turn problems, the value function for each turn's state {needs to be defined}. The state of each turn's  value $V_{ta\btheta}$ is the expectation of the cumulated reward  conditioned on the input text $i_{ta}^x$, which is defined as 

\safevspace{-6mm}
{\small
\begin{align*}
    V_{ta\btheta}(i_{ta}^x) = \EE \left[\sum_{x' = x}^{\texttt{length}(s_{ta})} r_{\btheta} (q, s_{ta}^{1:x'}) \bigggiven q, i_{ta}^{x} \right].
\end{align*}}
\safevspace{-5mm}

\noindent Here, \( i_{ta}^{x'} = q \oplus_{t' \in [t-1], j \in [A]} s_{t'j} \oplus s^{1:{x'}}_{ta} \) and 

\safevspace{-4mm}

{
\small
\begin{align*}
    &r_{\btheta}(q, s_{ta}^{1:x'}) = \pmb{1}(x' = \texttt{length}(s_{ta})) R_{\btheta}(q, s_{ta}) 
\\
&\, -\lambda_{\text{KL}} \text{KL}\left(\text{LLM}_{\theta_{\text{ref},ta}}(i_{ta}^{x'}) \,\|\, \text{LLM}_{\theta_{ta}}(i_{ta}^{x'})\right), 
\end{align*}
}

\safevspace{-4mm}

\noindent where $t$ denotes the turn index and $a$ refers to agent  $a \in [A]$, $s_{ta}^{1:x}$ represents the generated token from agent $a$ up to the $x$-th token in turn $t$, with $\theta_{\text{ref},ta}$  denoting the parameter of a reference LLM, and  \( \lambda_{\text{KL}}\geq 0\) is some regularization coefficient. As per our reward construction, the value maximization 
not only considers the current turn’s verifier score,  but also anticipates future verifier scores from the same or other agents across multiple turns, which makes multi-agent training {relevant}. {We estimate the advantage function using Generalized Advantage Estimation (GAE)~\citep{schulman2015high}, which leverages the value function to measure how much better the current token selection is compared to the baseline value function.  

The value function is approximated by a neural network with parameter \(\theta_{vta}\), denoted as \(V(i_{ta}^x;\theta_{vta})\), which serves as an estimate of \(V_{ta\btheta}(i_{ta}^x)\). Using \(V(i_{ta}^x;\theta_{vta})\), we estimate the advantage function \(A(i_{ta}^x;\btheta, \theta_{vta})\) via GAE. } The loss function for multi-agent PPO is then given by: 
\vspace{-7mm}

\begin{align*}
    &L_{\text{PPO}}(\theta_{ta}, \theta_{vta})  = L_{\text{Surrogate}}(\theta_{ta}) + L_{\text{Value}}(\theta_{vta}),
\end{align*}
\vspace{-8mm}

\noindent where $L_{\text{Surrogate}}(\theta_{ta})$ is defined as {

\vspace{-5mm}

{\small \begin{align*}
\mathbb{E}\Bigl[
  &\min\Bigl(
    R_{ta}^x\,A\bigl(i_{ta}^x; \btheta_{\text{old}}, \theta_{\text{old},vta}\bigr), 
     \\
    &\qquad \qquad \texttt{clip}_{\epsilon}\bigl(R_{ta}^x\bigr)\,
    A\bigl(i_{ta}^x; \btheta_{\text{old}}, \theta_{\text{old},vta}\bigr)
  \Bigr)
\Bigr].
\end{align*}}
\safevspace{-7mm}

\noindent and $L_{\text{Value}}(\theta_{vta})$ is defined as 

\vspace{-5mm}

{\small 
\begin{align*}
    &L_{\text{Value}}(\theta_{vta}) = \EE \left[\lambda_{\text{value}} \left(V(i_{ta}^x; \theta_{vta}) - V^\text{target}_{ta}(i_{ta}^x)\right)^2\right].
\end{align*}
} 
\vspace{-5mm}

\noindent Here, $\texttt{clip}_{\epsilon}(\alpha) := \min(\max(1-\epsilon, \alpha), 1+\epsilon)$, \( R_{ta}^x = \frac{\text{LLM}_{\theta, ta}(s^{x+1}_{ta} \mid i_{ta}^{x})}{\text{LLM}_{\theta_{\text{old}, ta}}(s^{x+1}_{ta} \mid i_{ta}^{x})} \), \(\btheta_{\text{old}} = (\theta_{\text{old},{ta}})_{t\in[T], a \in [A]}\) is the parameter used in the rollout for multi-agent PPO, and 

\vspace{-5mm}
{\small \begin{align*}
    V_{ta}^{\text{target}}(i_{ta}^x) = V(i_{ta}^x; \theta_{\text{old}, vta}) + A(i_{ta}^x; \btheta_{\text{old}}, \theta_{\text{old}, vta})).
\end{align*}}
\noindent The expectation \( \mathbb{E} \) is taken over the randomness from \[q \sim \mathcal{Q}, s_{t'a'} \sim \text{LLM}_{\theta_{\text{old}, t'a'}}(q \oplus_{t'' \in [t'-1], j \in [A]} s_{t'j}) \] for all $t' \in [t], a \in [A]$, and $x \sim \text{Unif}([\texttt{length}(s_{ta})])$, where $\cQ$ denotes the distribution of questions.

Each agent for each turn optimizes its policy and value function simultaneously, over the parameters \((\theta_{ta}, \theta_{vta})\). These agent interactions among multiple LLMs inherently lead to a multi-agent RL problem, rather than a single-agent RL one, as each agent influences others' learning processes throughout training. 

\safevspace{-2mm}

\subsection{Reward {Shaping to Incentivize Collaboration}}
\label{sec:incentive-coll}
As discussed in Section~\ref{sec:theory}, incorporating {additional incentives in the reward}  can steer  agents towards better   collaboration. We define four key parameters when implementing such a reward-shaping: parameters $\alpha_0$ and $\alpha_1$ correspond to the incentives related to an agent's own revision of the answer, {and parameters $\beta_0$ and $\beta_1$ correspond to those related to her influence on other agents' answers.} Specifically, $\alpha_0$ represents the ability to extract useful information from incorrect answers ({\it critical reasoning}), while $\alpha_1$ reflects an agent's tendency to be {\it persuaded} by the correct information. Meanwhile, $\beta_0$ represents the ability to provide incorrect answers that still contain useful information, potentially leading to better responses in the future turns. In contrast, $\beta_1$ captures an agent's ability to effectively {\it persuade others} when providing correct answers. We provide \Cref{tab:bonus_penalty_case1} and \Cref{tab:bonus_penalty_case2} to summarize the design of these incentives. 

\begin{table}[h]
    \small
    \centering
    \resizebox{\linewidth}{!}{  %
    \begin{tabular}{cccc}
        \toprule
        Answer (t) & Answer (t+1) & Majority (t) & Incentive \\
        \midrule
        R & W & R & -$\alpha_1$ \\
        R & W & W & -$\alpha_0$ \\
        W & R & W & $\alpha_0$ \\
        W & R & R & $\alpha_1$ \\
        \bottomrule
    \end{tabular}
    }
    \vspace{-2mm}

     \caption{{The design of additional incentives} regarding an agent's own answer revision in \ourstwo. The incentive is determined by how an agent changes its answer between consecutive turns ($t$ and $t+1$) relative to the majority opinion of others. \textbf{R} indicates a correct answer, \textbf{W} indicates an incorrect answer. The incentive value is applied to the agent's answer at turn $t+1$.}
    \label{tab:bonus_penalty_case1}
        \vspace{-3mm}
\end{table}

\begin{table}[h]
    \small
    \centering
    \resizebox{\linewidth}{!}{  
    \begin{tabular}{cccc}
        \toprule
        Majority (t) & Majority (t+1) & Answer (t) & Incentive \\
        \midrule
        R & W & R & -$\beta_1$ \\
        R & W & W & -$\beta_0$ \\
        W & R & W & $\beta_0$ \\
        W & R & R & $\beta_1$ \\
        \bottomrule
    \end{tabular}
    }
    \vspace{-2mm}
    \caption{{The design of additional incentives} regarding an agent's influence on other agents' answers in \ourstwo. The incentive is based on how the majority opinion changes between consecutive turns $t$ and $t+1$ relative to the agent's answer at turn $t$. The incentive value is applied to the agent's answer at turn $t$.}
    \label{tab:bonus_penalty_case2}
            \vspace{-3mm}

\end{table}

\section{Experiments}
\label{sec:exp}

\safevspace{-1.5mm}
\subsection{Datasets}
\safevspace{-1mm}

We evaluate \oursspace on  two benchmark NLP tasks to validate its performance in both mathematical reasoning and logical natural language inference. The details are summarized as follows:
\safevspace{-2mm}

\paragraph{GSM8K \citep{cobbe2021training} and TinyGSM \citep{liu2023tinygsm}.} 
GSM8K is a benchmark dataset designed to assess a model's mathematical reasoning abilities, requiring models to solve high-school-level mathematics problems. TinyGSM is an augmented version of GSM8K, generated using GPT-3.5-turbo, where solutions are provided in Python. Importantly, we did not utilize the reasoning processes from GSM8K or TinyGSM but rely solely on their final answers. For training the verifier model, we used 7,463 samples from GSM8K. Additionally, we incorporated the first 12,800 synthetic samples from TinyGSM for \ours\footnote{We divided the dataset for training the verifier and training \ours~to prevent overfitting LLMs to the trained verifiers.}. For evaluation, we hold out 1,319 samples from GSM8K as a test set. 

\paragraph{Adversarial Natural Language Inference (ANLI)~\citep{nie2019adversarial}.}
ANLI is designed to evaluate a model’s natural language understanding by presenting adversarial examples that challenge even the state-of-the-art models. To train the verifier model, we used first 10,000 training examples. Furthermore, we used the next 12,800 examples for \ours~training and 1,200 samples for testing.

\paragraph{Evaluation Method.} 
After all turns are completed, the final answer is determined using a majority voting scheme among the agents' responses. The accuracy is based on whether the majority-selected response is correct.
In cases where no clear majority winner emerges (e.g., a tie in vote counts), we adopt an expectation-based approach by weighting the correctness of each tied response proportionally. For example, if two agents receive an equal number of votes, the final score is adjusted as the expected accuracy of selecting the first agent’s answer as the final result. This ensures a continuous evaluation metric rather than an arbitrary tiebreaker.

\subsection{Models}
We primarily use the Microsoft Phi-3-mini-128k-instruct (3.4B) model~\citep{abdin2024phi}, together with Qwen2.5-3B-instruct~\citep{yang2024qwen2} and Llama-3-8B-instruct~\citep{dubey2024llama} for  the experiments. Due to computational constraints, we mainly use quantized models and fine-tuned them with QLoRA~\citep{dettmers2024qlora}. We defer the training details to \Cref{appendix:training}. When evaluating on GSM8K and ANLI, we set the max token length to 300 and 250, respectively.

\subsection{Experiment 1: Vanilla Collaboration by Off-the-shelf LLMs Cannot Improve Performance, While \ours-Trained LLMs Can}
\label{sec-exp1}
We first compare the collaboration performance of off-the-shelf LLMs with \ours-trained LLMs. The training was conducted with two agents collaborating over three turns. An overview of the trained system is provided in \Cref{fig1:problem-setting}. In Experiment 1, we trained the model starting from turn \( t \geq 2 \) for two reasons: (a) the first turn primarily focuses on knowledge acquisition from each dataset, and (b) to ensure a fair comparison with off-the-shelf LLMs. We focus on enhancing collaboration skills rather than teaching specific task knowledge. For this experiment, we used Phi-3-mini-128k-instruct and evaluate the trained models in a three-agent and three-turn collaboration environment.

We observe that even when the off-the-shelf LLM is allowed to generate longer reasoning (600 tokens, twice the output length of our \ours-trained model  model), its accuracy did not improve across turns. 
This aligns with prior findings in the literature, particularly for models that are not sufficiently strong. For instance, \citet[Table 7]{huang2023large} provided  evidence that additional turns do not necessarily improve the  performance significantly. 
Similarly, our results show that off-the-shelf LLMs' performance may not benefit from additional turns. In contrast, LLMs trained using \ours~exhibit improved performance as the number of collaboration turns increased, as shown in \Cref{fig:turn-performance-gsm9k-anli}.

\begin{figure}[!h]
       \centering
        \includegraphics[width=\linewidth]{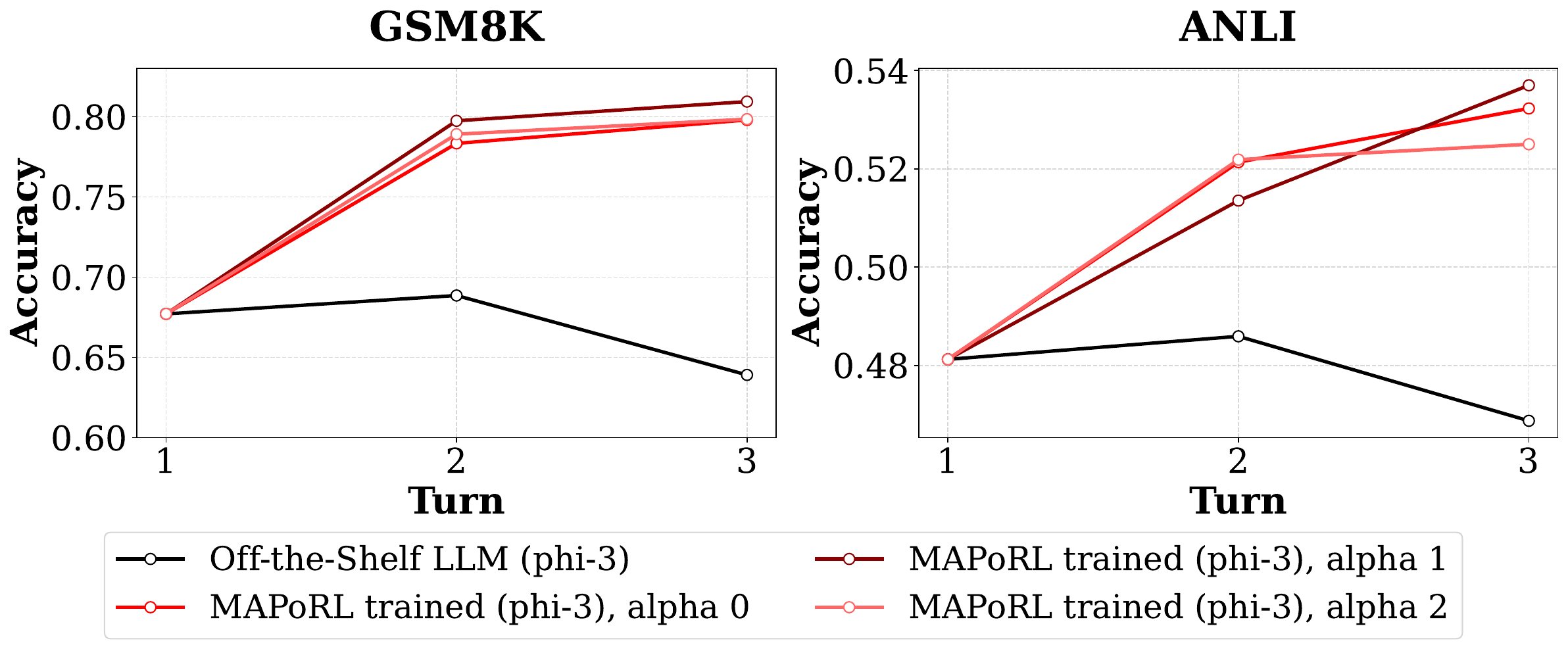}
    \safevspace{-4mm}
    \caption{Performance comparison of different LLMs across tasks (left: GSM8k, right: ANLI) under various settings. We evaluate collaboration ability in five conditions: (1) off-the-shelf LLMs collaborating and (2) models trained using \oursspace collaborating (with all incentive parameters (\Cref{sec:exp2}) $\alpha, \beta = 0,1, 2$, respectively).}
    \label{fig:turn-performance-gsm9k-anli}
\end{figure}
\safevspace{-2mm}

\begin{remark}[Domain-Specific Knowledge Acquisition vs. Collaboration Ability Improvement]
\upshape
One might question whether the performance gains observed in \ours-trained models stem from acquiring domain-specific knowledge rather than improved collaboration ability. To address this, we compare off-the-shelf LLMs and \ours-trained models by testing how well they perform on questions without any collaboration, providing \ours-trained models only the original question -- without interaction history -- to check if their performance is solely due to domain knowledge learned during training. The results are as follows:

\begin{table}[h!]
\small
\centering
\begin{tabular}{|c|c|c|c|}
\hline & \textbf{Phi-3} & \textbf{\oursspace T2} & \textbf{\oursspace T3} \\
\hline
GSM8k & 0.609 & 0.604 & 0.611 \\
\hline
ANLI & 0.451 & 0.458 & 0.453 \\
\hline
\end{tabular}
\end{table}
Here, we provide the same questions to the off-the-shelf Phi-3 model, the \ours-trained turn-2 model, and the \ours-trained turn-3 model. The similar performance across these models suggests that \oursspace training did not enhance task-specific knowledge but rather improved the models' ability to collaborate effectively. 
\end{remark}

\safevspace{-3mm}
We also provide the changes in the fraction of responses that transition their correctness over multiple turns of \ours. The fraction of Incorrect $\to$ Incorrect responses decreased, and the fraction of Correct $\to$ Incorrect responses also decreased, indicating that \oursspace enhanced effective collaboration.

\safevspace{-6mm}

\begin{figure}[!h]
    \centering
    \includegraphics[width=1\linewidth]{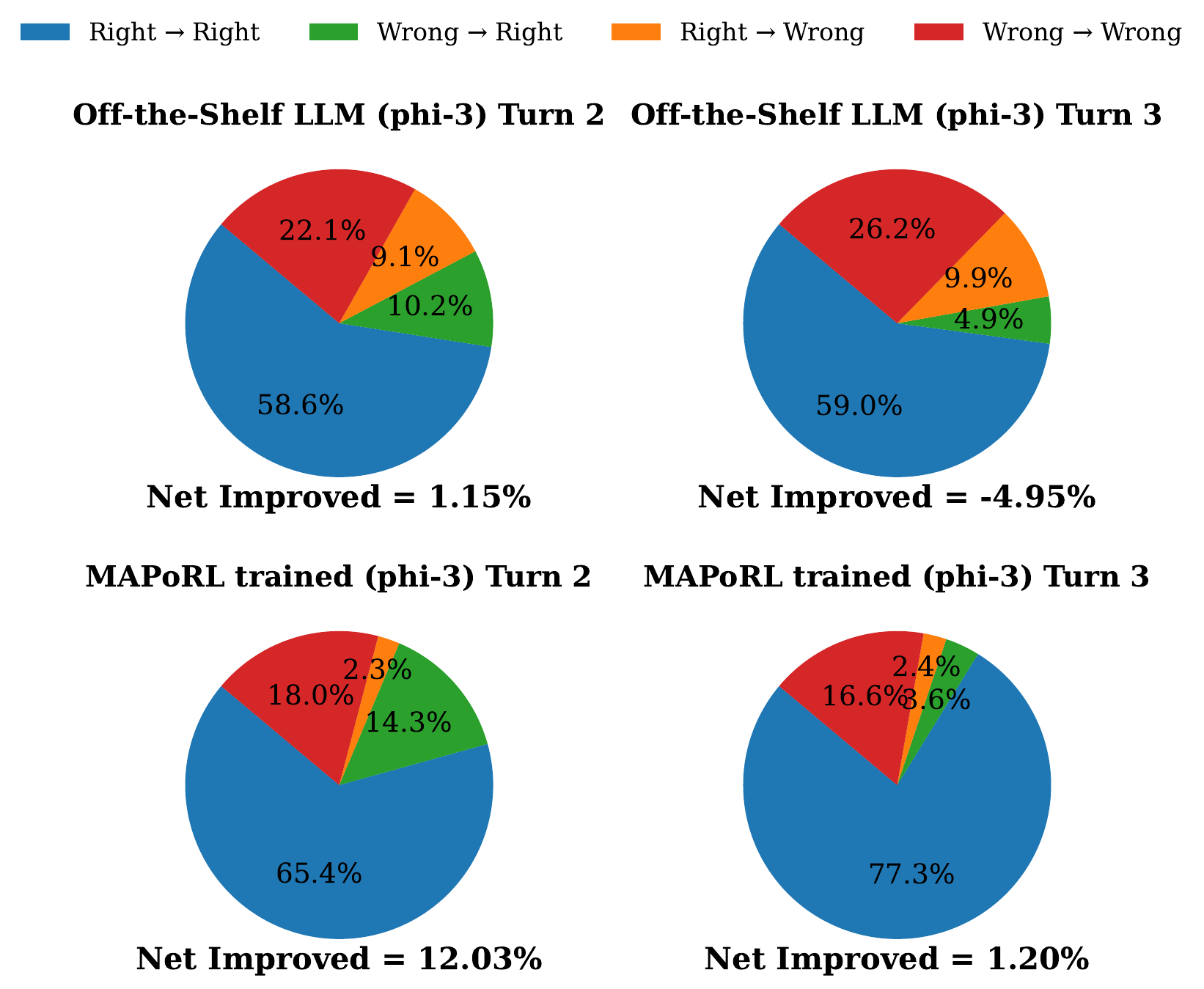}
\safevspace{-1mm}
    \caption{Changes in the fraction of responses that transition their correctness over multiple turns of \oursspace on GSM8k.} 
    \label{fig:chart-GSM}
\end{figure}
\safevspace{-6mm}

\begin{figure}[!h]
    \centering
    \includegraphics[width=1\linewidth]{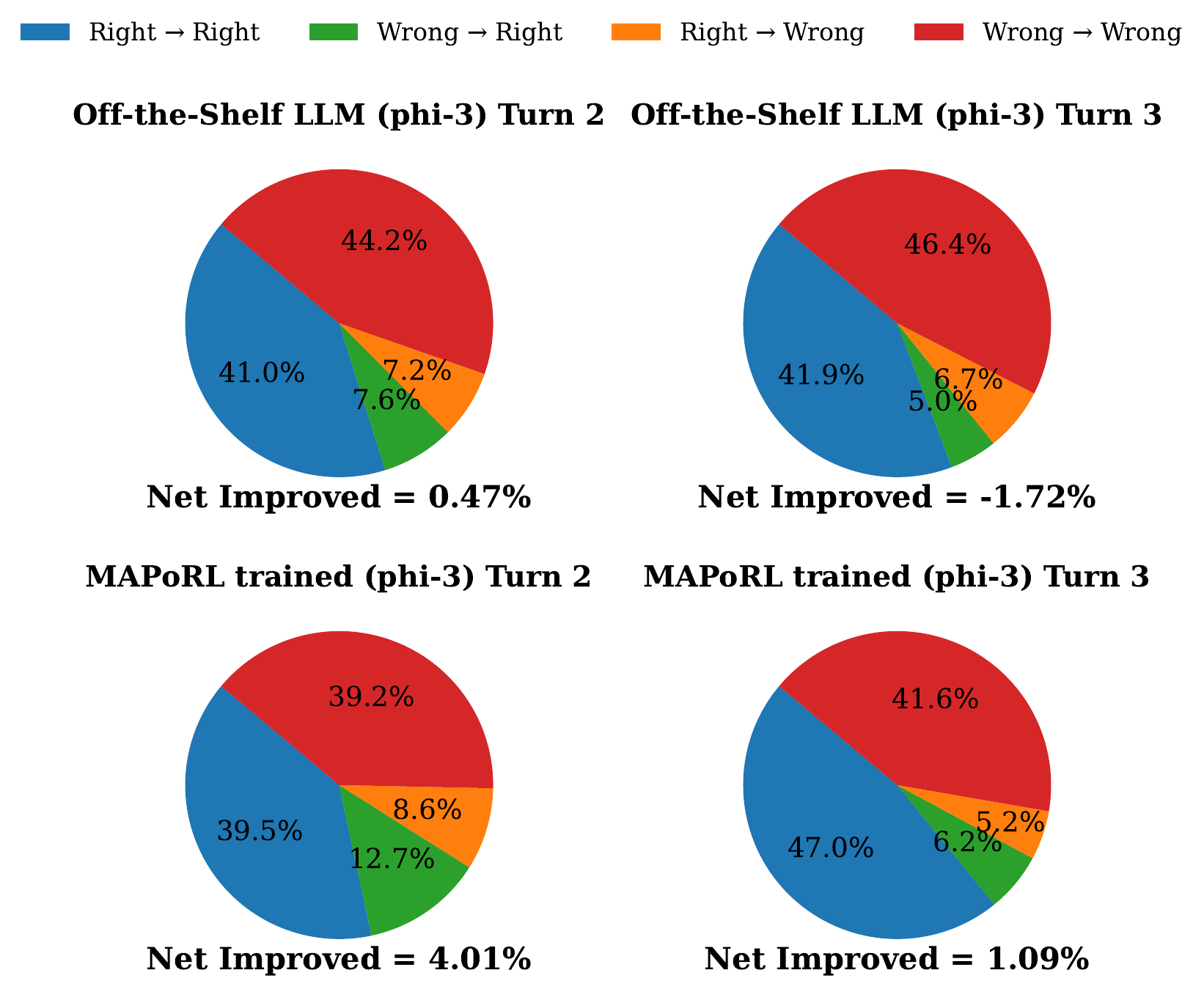}
    \caption{Changes in the fraction of responses that transition their correctness over multiple turns of \oursspace on ANLI.}
    \label{fig:chart-ANLI}
\end{figure}

\safevspace{-4mm}
\subsection{Experiment 2: Reward Shaping with Collaboration Incentives}
\label{sec:exp2}
In addition to the multi-agent independent PPO framework, we then investigate the auxiliary incentive mechanism designed to enhance collaborative interactions. 
To analyze the impact of the incentive parameters (\(\alpha\) and \(\beta\), \Cref{sec:incentive-coll}), we simplify our experimental setup by limiting the total number of debate turns to 2 and analyze the following cases. Here, \(\alpha_0\) and \(\alpha_1\) correspond to incentives for an agent’s own revision, capturing \textit{critical reasoning} (extracting useful information from incorrect answers) and \textit{persuadability} (accepting correct information), respectively. Meanwhile, \(\beta_0\) and \(\beta_1\) correspond to incentives for influencing others, where \(\beta_0\) encourages providing incorrect but useful responses, and \(\beta_1\) reflects an agent’s ability to \textit{persuade others} with correct answers.

To analyze the impact of the incentive parameters (\(\alpha\) and \(\beta\), \Cref{sec:incentive-coll}), we simplify our experimental setup by limiting the total number of debate turns to 2 and analyze the following cases. Here, \(\alpha_0\) and \(\alpha_1\) correspond to the incentives related to an agent's own revision of the answer, while \(\beta_0\) and \(\beta_1\) correspond to the incentives related to the agent's influence on other agents' answers.
\begin{table}[h]
    \centering
    \resizebox{\linewidth}{!}{  
    \begin{tabular}{ccccccc}
        \toprule
        ($\alpha_0, \alpha_1$) & RWR & RWW  & WRW & WRR & $\Delta_0$ & $\Delta_1$ \\ 
        \midrule
        (0, 0) &0.0529 & 0.0563 & 0.1244 & 0.2286 & 0.1757 &0.0661 \\
        (0, 2) & 0.0270 & 0.0521 & 0.1259 & 0.2194 & 0.1924 &0.0738 \\
        (2, 0) & 0.0500  & 0.0563 & 0.1241 & 0.2272 & 0.1772 &0.0678  \\
        \bottomrule
    \end{tabular}
    }
    \caption{Analysis of answer revision patterns under different $\alpha$ parameters. The columns RWR through WRR show the proportion of each transition type, where the three letters indicate Answer(t), Answer(t+1), and Majority(t) respectively. {R and W stand for right and wrong answer.} $\Delta_0$ measures the difference in transitions from wrong to right answers when the majority is wrong (WRW $-$ RWW) which is related to $\alpha_0$, while $\Delta_1$ measures transitions when the majority is right (WRR $-$ RWR) which is related to $\alpha_1$. }    \label{tab:alpha_analysis}
\end{table}
\safevspace{-3mm}

\paragraph{Analysis of $\alpha_0$ and $\alpha_1$.}
We compare baseline $(\alpha_0, \alpha_1) = (0,0)$ against two configurations: $(0,2)$ and $(2,0)$. When $\alpha_1$ was increased to 2, we observe a 9.5\% improvement in $\Delta_1$, indicating that incentivizing agents to follow correct majority opinions effectively improved performance. When $\alpha_0$ was increased to 2, we observed a smaller (2.57\%) improvement in $\Delta_0$, suggesting that rewarding agents for deviating from incorrect majority opinions had a positive but limited effect.
\begin{table}[h]
    \centering
    \resizebox{\linewidth}{!}{  
    \begin{tabular}{ccccccc}
        \toprule
        ($\beta_0, \beta_1$) & RWR & RWW  & WRW & WRR & $\Delta_0$ & $\Delta_1$ \\ 
        \midrule
        (0, 0) &0.0070 & 0.0453& 0.0226 & 0.0221 & 0.0151 &-0.0227\\
        (0, 2) & 0.0686 & 0.0461 & 0.0231 & 0.0230& 0.0161 &-0.0230 \\
        (2, 0) & 0.0011  & 0.0360 & 0.0161 & 0.0188& 0.0177&-0.0199 \\
        \bottomrule
    \end{tabular}
    }
    \caption{Analysis of majority opinion influence under different $\beta$ parameters. Meaning of the column is the same as Table~\ref{tab:alpha_analysis}.
    }    \label{tab:beta_analysis}
\end{table}

\vspace{-6mm}

\paragraph{Analysis of $\beta_0$ and $\beta_1$.}
We compare baseline $(\beta_0, \beta_1) = (0,0)$  against configurations $(0,2)$ and $(2,0)$. Increasing $\beta_1$ to 2 resulted in a slight decrease in $\Delta_1$ (-1.32\%), indicating that incentivizing agents based on their influence when correct did not improve outcomes. However, increasing $\beta_0$ to 2 lead to a substantial improvement in $\Delta_0$ (17.2\%), suggesting that rewarding agents for constructive influence even when wrong (providing useful incorrect answers that lead to better future responses) significantly enhanced collaborative performance.

{For a total debating turns of 3, we also plot the collaboration performance  using models trained with \(\alpha_i = \beta_i = 0, 1, 2\) for \(i = 1, 2\) on the GSM8K and ANLI tasks (\Cref{fig:turn-performance-gsm9k-anli}). The results showed some performance improvement, though the gain was relatively modest.}

\safevspace{-2mm}

\subsection{Experiment 3: Collaboration Ability Acquired by \ours~Is Transferable}
Here, we investigate the transferability of collaboration abilities acquired through \oursspace across different datasets not used during training. We evaluate LLMs trained with \oursspace on one dataset when applied to tasks from other datasets. For instance, we assesse models trained on ANLI when solving tasks from GSM8k, along with other dataset combinations.
The results, presented in Table \ref{table:transfer-stacked}, demonstrate that collaboration abilities learned through \ours~are indeed transferable across datasets. This suggests that the models acquire a \textit{meta}-capability for effective collaboration, even when encountering novel, unseen tasks.
\vspace{-3mm}

\begin{table}[!ht]
  \centering
  \resizebox{\columnwidth}{!}{%
    \begin{tabular}{llccc}
      \toprule
      \textbf{Training} $\rightarrow$ \textbf{Evaluation} & \textbf{Model} 
          & \textbf{Turn 1} & \textbf{Turn 2} & \textbf{Turn 3} \\
      \midrule
      \multirow{2}{*}{\textbf{ANLI} $\rightarrow$ \textbf{GSM8K}} 
         & Off-the-shelf & 0.677 & 0.688 & 0.640 \\
         & Trained       & 0.677 & 0.712 & 0.720 \\
      \midrule
      \multirow{2}{*}{\textbf{GSM8K} $\rightarrow$ \textbf{ANLI}} 
         & Off-the-shelf & 0.482 & 0.486 & 0.468 \\
         & Trained       & 0.482 & 0.499 & 0.507 \\
      \bottomrule
    \end{tabular}
  }
  \caption{Performance comparison (Accuracy) of 3-agent collaboration using 
           \textit{off-the-shelf} vs.\ \textit{trained} LLMs. 
           For each dataset pair (rows in bold), the first row shows 
           the off-the-shelf performance and the second row shows the 
           trained model performance, across Turns 1--3.}
  \label{table:transfer-stacked}
\end{table}

These findings demonstrate that models trained through \ours~on one task can effectively generalize their collaborative capabilities to different, unrelated tasks. This generalization ability suggests that \ours~develops fundamental collaborative skills that transcend specific task domains.

\subsection{{Experiment 4: \oursspace with Heterogeneous LLMs Can Help}}
\label{sec:exp4}
In this experiment, we investigate collaborative learning between different foundation models, specifically examining co-training between (Phi3 3.4B and Qwen2.5 3B) and (Phi3 3.4B and Llama3-8B) pairs. In single-model evaluations, both Phi3 and Qwen2.5 3B demonstrate stronger performance compared to Llama3-8B. Due to GPU memory constraints necessitating simultaneous loading of two base models, we conduct experiments in a two-agent, two-turn environment. This setup enables us to explore whether models with heterogeneous capabilities could effectively collaborate to enhance the overall performance (\Cref{fig:turn-performance-diff}). The synergistic effects are particularly evident when models with different strengths worked together, suggesting that diverse model partnerships can yield better outcomes than individual model performance alone when we have \ours.

\subsection{Experiment 5: Naïve Supervised Fine-Tuning Using High-Quality Collaboration Samples May Not Induce Collaborative Behaviors} 
\label{sec:exp3}
In this experiment, we investigate whether models could learn collaborative behavior through SFT on high-quality debate trajectories. We generated 12,800 trajectories using the multi-agent system (\Cref{fig1:problem-setting}) with off-the-shelf LLMs to match the training sample size used in \oursspace for GSM8K. To provide favorable conditions for SFT, we allow a maximum of 600 tokens per response, which exceeded the token limit used in our \oursspace experiments.
We selected the top 10\% of trajectories using the following criteria: 1) excluding trajectories without well-formatted answers, 2) filtering out trajectories where the final majority voting result was incorrect, and 3) selecting 1,280 trajectories based on the verifier's score of the final answer, which evaluates both correctness and reasoning quality. Interestingly, the results indicate that SFT not only failed to enhance collaborative behaviors,  but also led to a decline in performance compared to the off-the-shelf model. Specifically, for turn-2, accuracy dropped to 0.578 (\(\Delta = -0.111\)), and for turn-3, it further decreased to 0.525 (\(\Delta = -0.114\))\footnote{Initially, these unexpected results led us to validate our findings through multiple experiments with varying temperatures  for language generation. The consistent performance degradation across turns was observed in all the cases. This pattern suggests fundamental challenges in using SFT to maintain collaborative performance across multiple debate turns.}. 
 This suggests that either substantially more training data would be required to learn effective collaborative behaviors, or that SFT might not be an effective approach for inducing such behaviors. {Contemporaneously, \citet{subramaniam2025multiagent} and \citet{zhao2025sirius} enhance multi-agent performance by incorporating new techniques into iterative SFT with their own data augmentation to generate effective collaboration examples, demonstrating its potential when combined with additional refinements. In contrast, our approach does not leverage data augmentation, but uses RL.}

\section{Concluding Remarks, Limitations, and Potential Risks}

In this paper, we have introduced \ours, a new post-training paradigm that leverages multi-agent RL to explicitly foster the collaboration among multiple LLMs. Unlike methods that rely solely on prompting or single-agent fine-tuning, \oursspace focuses on \textit{co-training} multiple LLMs, ensuring that each agent adapts its policy not just to immediate feedback, but also to the strategic behaviors of other agents over multiple interactive turns. By incorporating a verifier network for reward shaping with incentives, the framework guides each agent’s responses that account for both short-term correctness and long-term collaborative potential, thus promoting collaborative discussions that lead to more accurate final answers. 

Through an extensive set of experiments on reasoning-intensive tasks -- such as GSM8K for mathematical problem-solving and ANLI for logical natural language inference -- our results demonstrate that off-the-shelf LLMs often do not improve the overall performance with additional debate turns. In contrast, \ours-trained agents show significant improvements with accuracy increasing as collaboration progresses. Crucially, these collaborative abilities are shown transferable across tasks, suggesting that once LLMs learn to collaborate, they can retain a generalizable ``collaboration skill'' applicable to different domains.
Furthermore, our experiments with heterogeneous LLMs highlight that \oursspace can also foster collaborative synergy even among models of varying capabilities.

\section*{Limitations}
Since we use instruction prompts as inputs to the LLMs, the output can vary significantly depending on the prompts. As the first methodology paper, our experiments are conducted on relatively small LLMs (3B to 8B parameters) for fast iteration, and the observed behaviors may differ on larger models.  
After all turns of multi-LLM interactions, we apply majority voting to determine the final answer. Using alternative mechanisms, such as a manager agent that makes the final prediction based on the responses from multiple agents, may further improve the overall performance. 

\section*{Potential Risks}
As our proposed approach encourages and facilitates collaboration among multiple LLM agents, when adversarial or malicious agents exist, our method could lead to unintended harmful outcomes by enabling their collaboration with others.

\section*{Acknowledgment} We sincerely thank \href{https://scholar.google.com/citations?hl=en&user=K5WXLYEAAAAJ&view_op=list_works&authuser=2&sortby=pubdate}{Jisu Jang} for their valuable feedback on \Cref{fig1:problem-setting}. K.Z. acknowledges the support from the U.S. Army Research Office grant W911NF-24-1-0085 and the NSF CAREER Award 2443704.
\bibliography{main}

\onecolumn
\appendix
\section{Detailed Related Work Discussion}
\label{appendix:literature}
\paragraph{Multi-Agent Reinforcement Learning. }
Various algorithms have been proposed to address multi-agent reinforcement learning (MARL) \cite{hernandez2019survey,zhang2021multi},  including multi-agent Proximal Policy Optimization (PPO) \citep{yu2022surprising}, and value function factorization techniques such as QMIX and VDN \citep{rashid2020monotonic, sunehag2017value}. In the context of language models and collaborative debating we focus on, MARL takes on a particular and unique form. Here, each agent's state is represented by the sequence of previous responses from all the agents, with each agent deciding the next token based on this history. LLMs provide compact state representations through their hidden layers, enabling the use of long debate histories. 

\paragraph{Multi-Agent Collaboration with LLMs. }
An array of studies have explored effective collaboration frameworks among multiple large language model agents to solve complex tasks~\citep{wu2023autogen, li2024improving, zhao2024longagent}. For example, ``role-playing''-based approaches utilized multi-agent LLMs by assigning a specific role to each LLM~\citep{li2023camel}, and ``multi-agent debate''-based approaches prompted  each LLM agent to solve the task independently and then discuss \citep{du2023improving,khandebating}. 
In a debate, the agents reason through each other's answers to converge on a consensus  response, which may improve the factual accuracy, mathematical ability, and reasoning capabilities of the LLM \citep{du2023improving,liang2023encouraging, kim2024adaptive}. Similar mult-agentic frameworks include voting \citep{wang2023selfconsistency}, group discussions \citep{chen2024reconcile}, and negotiating \citep{fu2023improving}. However, all of these frameworks rely heavily on prompt engineering, which may lead to sub-optimal results \citep{huang2023large}, and do not consider {\it training} LLMs specifically for collaboration. Therefore, while 
multi-LLM systems seem promising at the first glance, their performance may be limited when using the  out-of-the-box  (pretrained)  LLM with only prompt tuning, which highlights the need for {\it training} for better multi-agent collaboration. Recently, \citet{stengel2024teaching} introduced a training framework for accepting or rejecting persuasion in multi-agent systems. Additionally, {very recently,} 
\citet{subramaniam2025multiagent} and \citet{zhao2025sirius} focused on training the entire multi-agent systems using iterative SFT.
In contrast,  \ourstwo~employs (multi-agent) RL to train the whole multi-LLM system. Recently, after \ours was released, \citet{liao2025marft} provided a similar training system of multi-agents with reinforcement learning.

\paragraph{RL for LLM Training. }
RL has been widely used in post-training  LLMs, e.g., for improving  factuality~\cite{tianfine}, code generation~\cite{le2022coderl}, and more recently and significantly, reasoning \cite{guo2025deepseek}. One prevalent approach of RL for LLM training is RL from human feedback (RLHF) 
\citep{ziegler2019fine, ouyang2022training,bai2022training, ahmadian-etal-2024-back}. 
RL offers a smooth generalization to the {\it multi-turn} setting  based on the Markov decision process (MDP) model, and there have been attempts to apply multi-turn RL for LLM training, such as RLHF for multi-turn model training to enhance the dialogue abilities~\citep{shani2024multi}, or deriving multi-turn RL objective for the improvement of mathematical reasoning~\cite{xiong2024building}. However, the major difference from  our work is that, these works did not consider multi-agent settings for collaboration. Recently, \citet{kumar2024training} enhanced LLMs' ability to self-correct using an RL-based approach. Our framework can accommodate this case by using a single agent in  \ourstwo.

\section{Additional Literature Review}
\label{appendix:deferred_literature}
\paragraph{Multi-Agent RL. }
Multi-agent reinforcement learning (MARL) has achieved significant advancements, particularly in cooperative games and their real-world applications, such as coordinating robot swarms \citep{huttenrauch2017guided} and self-driving vehicles \citep{shalev2016safe}. (A comprehensive overview of MARL can be found in \citet{zhang2021multi}).  The primary challenge in MARL lies in the exponentially large action space, making it difficult to optimize the policy for each agent. Various approaches have been proposed to address this issue, including multi-agent Proximal Policy Optimization (PPO) \citep{yu2022surprising}, value function factorization methods (QMIX, VDN) \citep{rashid2020monotonic, sunehag2017value}, and network-based formulations for multi-agent learning \citep{park2024multi}. These methods aim to make MARL more scalable with a large number of agents, mostly focusing on the classical models of stochastic/Markov games.

In the context of language models and collaborative debate systems, MARL takes on a unique form. Here, each agent's state is represented by the sequence of previous responses from all agents, with each agent deciding the next token based on this history. The detailed mathematical formulation for reinforcement learning in language models can be found in several theoretical and empirical studies on reinforcement learning with human feedback (RLHF) (\textit{e.g.,} \citet{ouyang2022training, zhu2023principled, park2024principled}). LLMs provide high-quality state representations through their hidden layers, enabling the consideration of long debate histories. Moreover, the sequential nature of these interactions inherently captures non-Markovian policies due to the extended sequence of responses. 

\paragraph{Teaching LLM Self-Correction.}
As mentioned in the main paper, single-agent self-correction and multi-agent collaboration has a very interesting relationship. Single-agent self-correction and multi-agent collaboration rely on multi-turn interactions—either internally, within a single agent, or collaboratively, among multiple agents—to improve results by challenging initial outputs and refining them through iteration. In single-agent systems, self-correction functions like an internal debate. The agent evaluates its own output over multiple turns, identifying potential mistakes and proposing alternative solutions. This process mirrors human reflection, where reconsideration often leads to improved conclusions. Meanwhile, in multi-agent systems, different agents engage in a collaborative debate, questioning and refining each other’s answers. By interacting in multiple rounds, these agents combine their individual perspectives to correct errors and arrive at more accurate solutions. 

There are several prior works aiming to improve LLMs' ability to self-correct. First line of work is using prompting technique, which guides LMs via prompting to iteratively correct the model outputs~\citep{madaan2024self}. However, some works use the ground-truth labels to determine when to stop the self-correction~\citep{kim2024language, shinn2024reflexion, yao2022react}, which is not applicable in the real-world scenarios where answer is not available for the tasks, and it is shown that under such scenarios the models can not do self-correct effectively~\citep{huang2023large}.

Another line of works train LLMs to \textit{learn} self-correction; \citet{qu2024recursive} introduced an approach using stronger LLMs to obtain multi-turn trajectories that have better responses through the iteration, and uses this data to fine-tune LLMs to learn self-correction. Different from this work, our approach do not require stronger LLMs for demonstrations, relying solely on the reward for training. \citet{welleckgenerating} proposed supervised fine-tuning to train a corrector model that can edit the model response iteratively, but this is specified the type of collaboration in the  generate-then-refine pattern, which can be sub-optimal to learned by the models. \citet{kumar2024training} employed an RL-based approach for the self-improvement of language models.

\paragraph{Multi-Agent LLMs with Game Theory.} 
Recent work has actively explored the strategic interactions of LLM agents within game-theoretic frameworks, as demonstrated in studies such as  \citet{park2024llm,brookins2023playing, akata2023playing, lore2023strategic, fan2023can}. Our paper can be viewed as training LLMs as solvers of cooperative games such as solving mathematical problems together.

\section{Deferred Content of \Cref{sec:theory}}
\label{appendix_defproof}

\begin{remark}[Rationale Behind the Setup]
\label{rmk:rationale}
\upshape
This formalization captures several key aspects of complex problem-solving dynamics. Choosing to collaborate ($a_0$) represents contributing \textit{exploratory ideas} or \textit{partial solutions}. While these contributions have a lower immediate probability of correctness \(R_{\text{col}}(q)\), they are essential building blocks towards the complete solution. Acting independently $(a_1)$ represents using conventional approaches that may yield a higher \textit{immediate probability} of correctness \(R_{\text{ind}}(q)\), but may contribute less to solving particularly challenging problems. The collaboration threshold \(C(q)\) represents the minimum amount of collaboration efforts and idea generation needed to solve complex problems. Once this threshold is reached (i.e., achieving collaborative synergy), the agents can combine their insights to solve the challenging problem, with a higher  reward \(R_{\text{syn}}(q)\).
\end{remark}

\claimsingle*
\begin{proof}
For the last turn (\( {t} = 2 \)), regardless of whether the opponent selects \( a_0 \) or not, choosing \( a_1 \) is {an} optimal strategy. This is because:
\begin{itemize}
    \item If collaborative synergy has been  achieved, the agent will always receive \( R_{\text{syn}}(q) \) regardless of their action in the second turn.
    \item If collaborative synergy has not been achieved, since we know that \( R_{\text{col}}(q) < R_{\text{ind}}(q) \), the optimal choice is to select \( a_1 \) in the final turn to maximize the immediate reward.
\end{itemize}

Therefore, considering the cumulative reward for the turn $t=1$, the 
reward matrix is given as follows: 

\[
\begin{array}{c|cc}
 & \text{\( a_0 \) (Collaborate)} & \text{\( a_1 \) (Act independently)} \\ \hline
\text{\( a_0 \) (Collaborate)} & (R_{\text{col}}(q) + R_{\text{syn}}(q), R_{\text{col}}(q) + R_{\text{syn}}(q)) & (R_{\text{col}}(q) + R_{\text{ind}}(q), 2R_{\text{ind}}(q)) \\
\text{\( a_1 \) (Act independently)} & (2R_{\text{ind}}(q), R_{\text{col}}(q) + R_{\text{ind}}(q)) & (2R_{\text{ind}}(q), 2R_{\text{ind}}(q)) \\
\end{array}
\]

Since the opponent chooses \( a_0 \) with probability \( \pi(q) \), the expected reward for choosing \( a_0 \) is:

\[
(R_{\text{col}}(q) + R_{\text{syn}}(q)) \pi(q) + (R_{\text{col}}(q) + R_{\text{ind}}(q))(1-\pi(q)).
\]

The expected reward for choosing \( a_1 \) is $2R_{\text{ind}}(q)$. To determine the optimal strategy, we compare these two expected rewards. The agent should collaborate (\( a_0 \)) if:

\[
(R_{\text{col}}(q) + R_{\text{syn}}(q)) \pi(q) + (R_{\text{col}}(q) + R_{\text{ind}}(q))(1-\pi(q)) \geq  2R_{\text{ind}}(q).
\]
which is equivalent to 
\[
(R_{\text{syn}}(q) - R_{\text{ind}}(q))\pi(q) \geq R_{\text{ind}}(q) - R_{\text{col}}(q).
\]

Thus, if \( (R_{\text{syn}}(q) - R_{\text{ind}}(q))\pi(q) \geq R_{\text{ind}}(q) - R_{\text{col}}(q) \), the optimal strategy is to \textit{collaborate} (\( a_0 \)). Otherwise, the agent should act independently (\( a_1 \)) to maximize their cumulative expected reward.
\end{proof}

Now, we provide the formal statement of \Cref{claim:2}. Before doing so, we define the regularized Nash Equilibrium (NE).
\begin{definition}[Regularized NE]
An entropy-regularized Nash equilibrium is defined as a strategy profile $\pi^\ast$ where each player maximizes a regularized objective that combines the expected reward with an entropy term. Specifically, for each player \(i\), the equilibrium strategy \(\pi_i^\star\) satisfies 
\[
\pi_i^\star = \arg\max_{\pi_i} \; \mathbb{E}_{a_i \sim \pi_i,\, a_{-i} \sim \pi_{-i}^\star}\bigl[u_i(a_i,a_{-i})\bigr] + \tau H(\pi_i),
\]
where \(\tau>0\) is a temperature parameter and \(H(\pi_i) = -\sum_{a_i} \pi_i(a_i) \log \pi_i(a_i)\) is the Shannon entropy of the strategy, and $u_i$ is the utility function of player $i$. This entropy term smoothens the best response, leading to a softmax (or logit) formula of the optimal strategy:
\[
\pi_i^\star(a_i) = \frac{\exp\Bigl(\frac{1}{\tau}\, \mathbb{E}_{a_{-i} \sim \pi_{-i}^\star}\bigl[u_i(a_i,a_{-i})\bigr]\Bigr)}{\sum_{a_i'} \exp\Bigl(\frac{1}{\tau}\, \mathbb{E}_{a_{-i} \sim \pi_{-i}^\star}\bigl[u_i(a_i',a_{-i})\bigr]\Bigr)}.
\]
\end{definition}

\noindent \textbf{Observation 2. }
Consider a game where each agent maximizes their expected cumulative utility plus an entropy regularizer with a small regularization coefficient \(\tau > 0\). Let \(\text{NE}(\tau)\) denote the unique Nash equilibrium of the regularized game for a fixed \(\tau > 0\). As \(\tau \to 0\), the sequence of equilibria \(\text{NE}(\tau)\) converges to  Collaborate $(a_0)$ if   
$$R_{\text{syn}}(q) = 1 > \max( 3R_{\text{col}}(q) - 2R_{\text{ind}}(q),2R_{\text{ind}}(q) - R_{\text{col}}(q)).$$

\begin{proof}
Following the reasoning in showing  \Cref{claim:1}, we analyze the cumulative reward for the turn $t=1$. The reward matrix is given by:

\[
\begin{array}{c|cc}
 & a_0 \text{ (Collaborate)} & a_1 \text{ (Act independently)} \\ \hline
a_0 \text{ (Collaborate)} & (R_{\text{col}}(q) + R_{\text{syn}}(q), R_{\text{col}}(q) + R_{\text{syn}}(q)) & (R_{\text{col}}(q) + R_{\text{ind}}(q), 2R_{\text{ind}}(q)) \\
a_1 \text{ (Act independently)} & (2R_{\text{ind}}(q), R_{\text{col}}(q) + R_{\text{ind}}(q)) & (2R_{\text{ind}}(q), 2R_{\text{ind}}(q)) \\
\end{array}.
\]
If $R_{\text{syn}}(q) = 1 > 2R_{\text{ind}}(q) - R_{\text{col}}(q),$ then this game is a coordination game, and according to \citet[Theorem 1]{zhang2016quantal},
as \(\tau \to 0\), the regularized NE converges to the risk-dominant strategy \citep{harsanyi1988general} in a \(2 \times 2\) game. In this setting, {by definition,} the collaboration strategy \((a_0, a_0)\) is risk-dominant  \citep{harsanyi1988general} if:
\[
(R_{\text{col}}(q) + R_{\text{syn}}(q)) + (R_{\text{col}}(q) + R_{\text{ind}}(q)) > (2R_{\text{ind}}(q) + 2R_{\text{ind}}(q)),
\]
which is equivalent to 
\[
R_{\text{syn}}(q)  > 3R_{\text{ind}}(q) - 2R_{\text{col}}(q).
\]
Combining the two conditions completes the proof. 
\end{proof}

\section{Deferred Details in \Cref{ssec:exp-toy}}
\label{appendix:toy}

The game is solved using backward induction with the state represented as \((\text{turn}, \texttt{count})\), where \(\texttt{count}\) denotes the number of times \((a_0, a_0)\) has occurred in the history of the interactions. Both players choose actions to maximize their expected cumulative utility plus an entropy term times a coefficient \(\tau = 0.1\).

\paragraph{Choices of $R_{\text{col}}(q), R_{\text{ind}}(q), R_{\text{syn}}(q), C(q)$. } Each instance of a question \(q\) is associated with parameters drawn as follows: the independent action reward \( R_{\text{ind}}(q) \) is sampled from a uniform distribution \( R_{\text{ind}}(q) \sim \text{Unif}(0, 1) \). The collaborative action reward \( R_{\text{col}}(q) \) is then sampled condition on \( R_{\text{ind}}(q) \), following \( R_{\text{col}}(q) \sim \text{Unif}\Big(0, R_{\text{ind}}(q)\Big) \). The synergy reward is fixed as \( R_{\text{syn}}(q) = 1 \).

\begin{figure}[htbp]
  \centering
  \includegraphics[width=\textwidth]{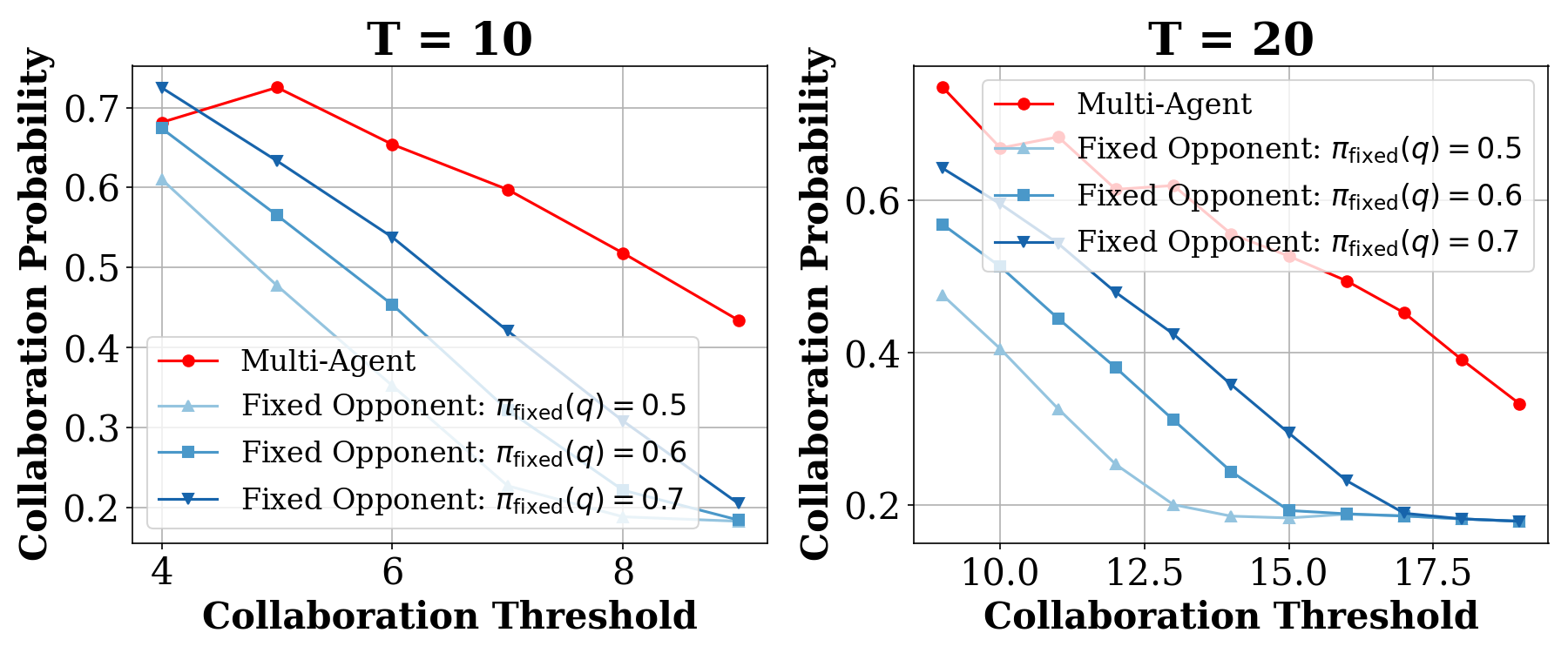}
    \safevspace{-5mm}
  \caption{
Collaboration probability (turn 1) as a function of the threshold \(C\), for two different horizons \(T=10\) (left) and \(T=20\) (right). 
We set the synergy reward to \(R_{\mathrm{syn}}=1\) and vary \(C\) from \(T{-}1\) down to \(\lfloor (T{-}1)/2 \rfloor\).  
The red curve (``Multi-Agent'') represents the collaboration probability when both players adaptively learn in a multi-agent setting.  
The blue curves show the best-response probabilities of Player 1 when facing a fixed opponent with collaboration probabilities \(\pi_{\mathrm{fixed}}(q) \in \{0.5,0.6,0.7\}\).  
Each data point represents an average over 5000 random samples of \((R_{\mathrm{ind}}, R_{\mathrm{col}})\). }
  \safevspace{-5mm}
  \label{figs:collaboration_probability}
\end{figure}

\section{Deferred Details of the  Verifier Models}
\label{appendix:verifier}
For a reasoning question $q$, the trained verifiers (reward models) assess the correctness of a complete solution path $s$, denoted as \(p(s \text{ is correct} \mid q)\) \citep{cobbe2021training, uesato2022solving, lightman2023let}. These reward models can either focus on the final outcome (outcome reward models) or provide step-by-step evaluations (process reward models). Although the latter generally yields better performance \citep{lightman2023let}, the limited availability of process-level annotated datasets—especially for challenging benchmarks like ANLI \citep{nie2019adversarial}—restricts its applicability. Additionally, while generating detailed trajectories for process supervision (as seen in \citet{wang2024math}) can be effective, our primary goal is not to enhance the language model's domain specificity. Consequently, we chose to adopt a simpler strategy by training a verifier based on a well-tuned output reward model. 

\paragraph{Verifier Models Structure.}
We used a quantized version of a language model as the backbone for the verifier. Additionally, we incorporated a linear head layer followed by a softmax layer to ensure that the verifier’s output falls within the range of 0 to 1. The default backbone model is Microsoft Phi-3-mini-128k-instruct \citep{abdin2024phi}. In experiments involving different model training setups (see \Cref{sec:exp4}), we employed a new verifier with a different base model, specifically the one used in \Cref{sec:exp4}. In these cases, we utilized Qwen2.5-3B-instruct \citep{yang2024qwen2} and Llama-3-8B-instruct \citep{dubey2024llama} as alternative backbone models.

\subsection{Training Procedure}
To train the verifier model, we generate tuples \( (q_i, s_{ij}, a_{ij}) \) for \( i \in [Q] \) and \( j \in [S] \), where \( q_i \) is the question, \( s_{ij} \) is one of the \( S \) generated solutions for question \( q_i \) generated by the base model of verifier model, and \( a_{ij} \) is the corresponding answer for \( (q_i, s_{ij}) \).  We label the token-level subsequences \( (q_i, s_{ij}^{1:x}) \) for \( x \leq \texttt{sequence length of} s_{ij} \) as \( y_{ij} = 1 \) if \( a_{ij} \) is correct, and \( y_{ij} = 0 \) if \( a_{ij} \) is incorrect. 

For the mathematical reasoning task, we utilized the GSM8K dataset \citep{cobbe2021training}, specifically the training set consisting of 7,463 questions, to generate 100 reasoning paths for each questions. For the natural language inference task, we employed the ANLI dataset \citep{nie2019adversarial}, using first 10,000 questions to generate 50 reasoning paths. The trajectories were evaluated based on their outcomes, and we excluded outputs that did not adhere to the required formatting. Specifically, we ensured that the language model first provided reasoning before presenting the final answer in the format \textbackslash\textbackslash boxed\{\}.

In our approach, we ensured that each question in the GSM8k dataset had a balanced set of reasoning paths. Specifically, if a question’s 100 reasoning paths contained at least 20 correct and 20 incorrect responses, we randomly selected 20 of each. However, when there were insufficient correct or incorrect paths, we augmented the data by generating additional paths using reference examples. For instance, if no correct reasoning path was available, we provided a correct example from the GSM8k dataset, and if incorrect paths were missing, we guided the language model to produce a response containing a trivial error. Ultimately, each GSM8k question was assigned 20 correct and 20 incorrect reasoning paths. For the ANLI dataset, we applied a similar procedure by starting with 50 reasoning paths per question, from which we randomly sampled 10 correct and 10 incorrect paths, supplementing the data as needed. Throughout this process, \textbf{we minimized reliance on the original reasoning paths} in the dataset since a) to enhance the overall diversity and quality of the generated data and b) to minimize the dependency on the reasoning path in the dataset. 

Next, we applied binary cross-entropy loss at the token level, aiming to minimize  
\[
\min_{\theta} \sum_{i,j,x} \left( y_{ij} \log \texttt{Verifier}_{\theta}(q_i, s_{ij}^{1:x}) +  (1-y_{ij}) \log (1-\texttt{Verifier}_{\theta}(q_i, s_{ij}^{1:x})) \right)
\]
where \( i \) denotes the question index, \( j \) represents the generated solution index, and \( t \) is the token index. By default, we utilized all solution tokens for optimization; however, in practice, focusing on the latter half of the generated solution tokens yielded better results.

For model training, we used QLoRA~\citep{dettmers2024qlora} with hyperparameters \( r=16 \) and \( \alpha=32 \). We used a training batch size of 2 and optimized the model using the AdamW \citep{loshchilov2017decoupled} optimizer with \(\beta_1 = 0.9\), \(\beta_2 = 0.95\), and a learning rate of \( 2 \times 10^{-4} \).

\subsection{Verifier Performance}
We report the performance of the verifier in \Cref{table:verifier-perf}.

\begin{table}[h]
\centering
\begin{tabular}{|l|l|l|}
\hline
              & GSM8k & ANLI  \\ \hline
Accuracy      & 0.91  & 0.92   \\ \hline
\end{tabular}
\caption{Performance of the verifier on different benchmarks. Accuracy is reported for GSM8K and ANLI. Notably, the verifier demonstrates higher accuracy in evaluating the correctness of answers compared to the accuracy of the LLM in generating correct answers. The verifier is classified as correct if the assigned reward is greater than 0.5 when the LLM-generated solution is correct, or if the reward is less than 0.5 when the LLM-generated solution is incorrect.}
\label{table:verifier-perf}
\end{table}

\subsection{Other  Observations} 

We experimented with various verifiers built upon different language model bases. Our first observation was that training the model using only the final answer did not perform as well as minimizing the cross-entropy loss over the last half of the generated tokens. Second, the verifier produced interpretable results, aligning with findings from \citet{liu2023tinygsm}. Lastly, when we used training samples from one base model but trained the verifier with a different base model as the backbone, the loss did not decrease, indicating that using the same base model for training is crucial for effective learning.

\subsection{Proof of \Cref{obs:token-wise-training}}
\begin{restatable}{theorem}{token}
\label{obs:token-wise-training}
Assuming the verifier model is sufficiently expressive, the optimal parameter \( \theta^\star \) that minimizes the expected cross-entropy loss between the true label and the verifier's output will satisfy  
\[
\texttt{Verifier}_{\theta^\star}(q, s^{1:x}) = \PP(\text{Final answer is correct} \mid q, s^{1:x}).
\]
\end{restatable}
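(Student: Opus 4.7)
The plan is to recognize this as a standard pointwise-minimization argument for binary cross-entropy, carried out at the level of token-indexed prefixes. First I would rewrite the training objective in expectation form. Using the data-generation process in the appendix, each sample contributes a loss term $-[y_{ij}\log \texttt{Verifier}_\theta(q_i,s_{ij}^{1:x}) + (1-y_{ij})\log(1-\texttt{Verifier}_\theta(q_i,s_{ij}^{1:x}))]$, where $y_{ij}\in\{0,1\}$ indicates whether the final answer of the full solution is correct. Taking expectations over the underlying distribution of $(q,s^{1:x},y)$ induced by the sampling procedure, the population objective becomes
\[
\mathcal{L}(\theta) = -\mathbb{E}_{(q,s^{1:x})}\Bigl[\mathbb{E}_{y\mid q,s^{1:x}}\bigl[y\log \texttt{Verifier}_\theta(q,s^{1:x}) + (1-y)\log(1-\texttt{Verifier}_\theta(q,s^{1:x}))\bigr]\Bigr].
\]

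Next I would exploit the expressivity assumption: since $\texttt{Verifier}_\theta$ is assumed rich enough to realize any measurable function from $(q,s^{1:x})$ to $[0,1]$, minimizing $\mathcal{L}(\theta)$ reduces to pointwise minimization. Fix an arbitrary prefix $(q,s^{1:x})$ and let $p(q,s^{1:x}) := \mathbb{P}(y=1\mid q,s^{1:x})$. The inner expected loss as a function of the predicted value $v\in(0,1)$ is $-p\log v - (1-p)\log(1-v)$. Differentiating with respect to $v$ and setting to zero gives $-p/v + (1-p)/(1-v) = 0$, whose unique solution is $v^\star = p$; a second-derivative check confirms this is a global minimum on $(0,1)$. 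Therefore the optimal verifier must equal the conditional probability $p(q,s^{1:x})$ at each prefix.

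Finally I would translate the label definition back into the statement of the theorem: by construction $y_{ij}=1$ precisely when the full solution $s_{ij}$ yields a correct final answer $a_{ij}$, so the conditional probability $\mathbb{P}(y=1\mid q,s^{1:x})$ is exactly $\mathbb{P}(\text{Final answer is correct}\mid q,s^{1:x})$, yielding the claim. The main subtlety, rather than an obstacle, is the expressivity assumption: without it, the equality would have to be replaced by a projection onto the realizable function class, and we would need a measurable selection argument for the conditional probability. One should also note that the per-token sum over $x$ does not affect the pointwise argument, because each token position contributes an independent additive term that is minimized separately once the distribution over prefixes is fixed.
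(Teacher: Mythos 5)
Your proposal is correct and follows essentially the same route as the paper's proof: both reduce the population cross-entropy objective to a pointwise optimization over the verifier's output at each prefix $(q, s^{1:x})$ (the paper via a functional derivative localized with an indicator, you via conditioning and minimizing the inner expectation), and both conclude the minimizer is the conditional probability $\mathbb{E}[y \mid q, s^{1:x}]$ under the expressivity assumption. Your added second-derivative check and the remark on what fails without full expressivity are minor refinements, not a different argument.
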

\begin{proof}
The expected loss can be written as
\[
   \mathcal{L}(\theta)
   =
   \mathbb{E}_{q, s, x, y}
   \Big[
     y \log \texttt{Verifier}_{\theta}(q, s^{1:x})
     +
     (1-y) \log (1 - \texttt{Verifier}_{\theta}(q, s^{1:x}))
   \Big].
\]
Defining $p_{\theta}(q, s^{1:x}) := \texttt{Verifier}_{\theta}(q, s^{1:x})$, we compute the partial derivative with respect to \( p_{\theta}(q', s^{1:x'}) \):
\[
     \mathbb{E}_{q, s, x, y} \Bigl[
       \mathbf{1}\bigl(q=q', s^{1:x}=s^{1:x'}\bigr)
         \Bigl(\frac{y}{p_{\theta}(q,s^{1:x})}
               -
               \frac{1-y}{1 - p_{\theta}(q,s^{1:x})}
         \Bigr)
     \Bigr],
\]
so we conclude 
\[
     \texttt{Verifier}_{\theta^\star}(q, s^{1:x}) = \mathbb{E}[y \mid q, s^{1:x}] = \PP(\text{Final answer is correct }\mid q, s^{1:x})
\]
since we assumed that the verifier model is sufficiently expressive.
\end{proof}
{It is worth noting that while \citet{yu2023outcome} provided a similar analysis using an \( \ell_2 \)-loss function, we extend the analysis to the entropy loss function, which is commonly used in classification tasks.}

\section{Various Reward Function Designs} 
\label{appendix_reward}
{We can shape the reward function with verifiers in several different ways, with the following designs of the reward function.}

\begin{itemize}
    \item \textbf{Immediate Verification Reward}: The immediate verification reward is defined as \( R_{\btheta} (q, s_{ta}) = \EE[\text{Verifier}(q, s_{ta})] \). This reward is based on the verifier's immediate evaluation of the solution \( s_{ta} \) at turn \( t \) for agent \( a \). It reflects the instantaneous correctness of the solution without considering future steps or contributions from other agents.

    \item \textbf{Cumulative Verification Reward}: The cumulative verification reward is given by 
    \begin{align}\label{def:cum-veri-reward}
    R_{\btheta} (q, s_{ta}) = \EE\left[ \frac{1}{\sum_{t' \in [t, T]} \gamma^{t'-t}} \sum_{t' \in [t, T]} \gamma^{t'-t}\text{Verifier}(q, s_{t'a})\right].
    \end{align}
    Here, the reward accounts for the verifier's evaluations across all remaining turns from \( t \) to the final turn \( T \). The term \( \gamma^{t'-t} \) represents a discount factor that prioritizes earlier rewards. This cumulative approach encourages solutions that not only perform well in the immediate turn but also lead to favorable outcomes in subsequent turns.

    \item \textbf{Influence-aware Verification Reward}: The influence-aware verification reward function is defined as 
\begin{align*}
    &R_{\btheta}(q, s_{ta}) = \EE \bigg[\frac{1}{\sum_{t' \in [t, T]} \gamma^{t'-t}} \biggl( \text{Verifier}(q, s_{ta})+ \sum_{t' \in [t+1, T]}\sum_{j \in [A]} \frac{1}{A}\gamma^{t'-t}\text{Verifier}(q, s_{t'j})\biggr) \bigg].  
\end{align*}
    This reward not only considers the verifier's score for the current solution \( s_{ta} \) but also incorporates the impact of this solution on the future answers of all agents. The term \( \sum_{j \in [A]} \frac{1}{A} \) averages the verifier's scores across all agents, reflecting the influence that \( s_{ta} \) has on the collective progress of the multi-agent system.
\end{itemize}

\section{Training Details of \ours} \label{appendix:training}

\subsection{Efficient Network Architecture for \ours}
\begin{figure}[!h]
    \centering
    \includegraphics[width=0.9 \linewidth]{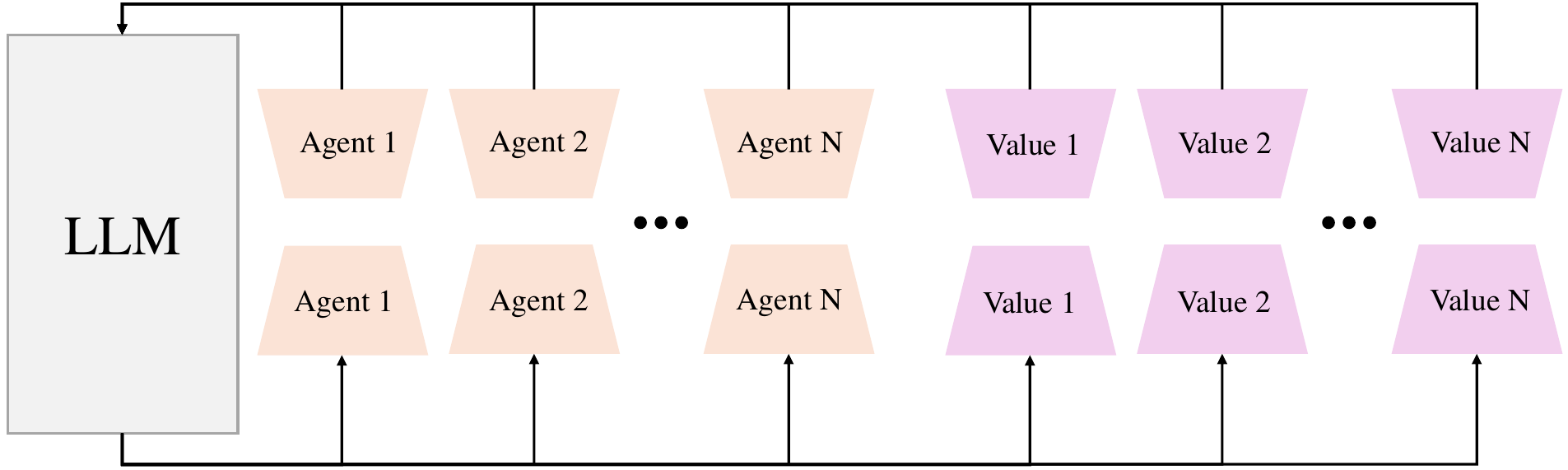}
\caption{We utilized various QLoRA adapters to implement multiple LLM agents and value functions simultaneously. Each agent and value function comprises less than 0.2\% of the parameters of the base LLM model. For the value function, we employed a QLoRA fine-tuned model with a value head.}
    \label{fig:LoRA}
\end{figure}

As we incorporate multiple language models in the training process, we need to implement them efficiently to fit within the limited resources of GPU memory. Our default setup is as follows: First, we implemented multiple language models using the same base model architecture, augmented with QLoRA adapters. Second, for constructing the value model, we employed pretrained LLMs, which was further fine-tuned by adding an additional linear head layer. Please refer to \Cref{fig:LoRA} for an overview of the network architecture.

\begin{remark}[Input of Value Functions]
    The function \( V_{\theta ta} \) is dependent on \( i_{ta}^x \), where \( i_{ta}^x \) includes the question and the history of \( s_{t'j} \) for \( t' \leq t-1 \) and \( j \in [A] \). For simplicity, we assume that \( s_{ta} \) contains the necessary information from \( s_{t'j} \) for \( t' < t \), which allows us to simplify the input to the value function as \( q \oplus s_{t}^{1:x} \).
\end{remark}

\subsection{Experimental Setup and Hyperparameter Configuration for \ours}
For every experiments, we set the hyperparameter as follows: 
For the verifier score of each agent's answer per turn, we set \texttt{non-eos-penalty} and \texttt{non-box-penalty} to \textit{True}, ensuring that answers without \textbackslash\textbackslash boxed\{\} are penalized with a verifier score of -10. We enforced a minimum output length of 50 and used \(\gamma = 1\) for the cumulative verification reward (see \Cref{def:cum-veri-reward}). The training was conducted on 8 NVIDIA A100-80GB GPUs, while for episode generation, 12 episodes were processed simultaneously. In the multi-agent PPO update, we set the batch size to 1 with a gradient accumulation step of 4, and each trajectory rollout was iterated four times for multi-agent PPO updates. For language model generation, we used a temperature of 0.7. Additionally, for QLoRA configuration, we set \( r = 8 \), \( \alpha = 16 \), and dropout rate of 0.05. The AdamW optimizer \citep{loshchilov2017decoupled} was used with \(\beta_1 = 0.9\) and \(\beta_2 = 0.95\), along with a learning rate of \(1.0 \times 10^{-5}\) and a warmup step of 10. For the value penalty term and KL penalty term, we set \(\lambda_{\text{KL}}= 2 \times 10^{-4}\) and \(\lambda_{\text{value}} = 0.1\).
\subsection{Engineering Challenges and Solutions}
\subsubsection{Addressing Reward Hacking}
A key advantage of our verifier approach is that, given a perfect verifier, we can operate without final answer labels—requiring only quality problems for the multi-agent system. This capability is particularly valuable for large-scale training or online learning scenarios (such as ChatGPT's user inputs which does not have a golden answer), where golden answers may be unavailable. However, reward hacking remains a persistent challenge, both in traditional RL problems \citep{amodei2016concrete, hadfield2017inverse} and increasingly in LLM development. For instance, the recent Deepseek R1 Model \citep{guo2025deepseek} avoided verifiers entirely to prevent reward hacking, instead requiring answer labels for all questions and implementing manual criteria with special tokens (e.g., "think" tokens) in their reward function. In our work, we encountered and addressed several reward hacking scenarios, significantly reducing their occurrence in our final system.
\paragraph{Insufficient Reasoning in Short Answers.} Initially, we observed that \oursspace produced overly concise answers when constrained only by non-eos and non-boxed penalties. We addressed this by implementing a penalty for responses shorter than 50 tokens. However, LLMs occasionally circumvented this by using alternative end tokens. 

For the ANLI dataset specifically, where models produced meaningless text despite length requirements, we introduced a reasoning-quality verification prompt. This prompt evaluated the presence of proper reasoning (independent of answer correctness) and proved effective. Notably, this issue did not manifest in mathematical reasoning tasks.
\paragraph{Token Repetition.} Repetitive token sequences are a known issue in language model outputs \citep{holtzman2019curious}. We observed instances of 2-5 token repetitions in our trained outputs. Our solution implemented a manual penalty of -10 for sequences repeating more than three consecutive times, excluding numeric values where repetition might be valid.
\paragraph{Post-boxed Token Generation.} Models attempted to exploit the reward system by adding arbitrary tokens or punctuation after \textbackslash\textbackslash boxed\{\}. We addressed this by introducing penalties for any token generation following the boxed expression.
\subsubsection{Evaluation Format Standardization}
To address concerns that performance improvements might stem from formatting rather than reasoning capabilities, we implemented a robust evaluation methodology. Our approach incorporated a post-processing step using an LLM to extract final answers, eliminating format-induced evaluation errors. This standardization ensures that performance metrics reflect actual reasoning and collaboration ability rather than formatting proficiency.

\subsection{Prompt Design for Collaborative Debate}
\subsubsection{Turn 1 Prompt}
\textbf{GSM8k and TinyGSM. }
\begin{lstlisting}[breaklines=true]
{"role": "user", "content": f''' Question:  {sample["question"]}

Solve the problem step by step and provide clear reasoning. Ensure that the reasoning is concise and directly relevant to solving the problem. Avoid adding commentary or unrelated content.

Present the final answer in the following format:

Answer: \\boxed{XX}'''}
   
\end{lstlisting}
\vspace{2cm}
\textbf{ANLI. }
\begin{lstlisting}[breaklines=true]
{"role": "user", "content":  f'''Premise:  {sample["premise"]}

Hypothesis: {sample["hypothesis"]}


Please determine the relationship between the premise and the hypothesis. Choose one of the following: 'entailment,' 'neutral,' or 'contradiction.'
Start with concise reasoning for your choice and conclude with your final answer. You do not need to restate the premise and hypothesis. Present the final answer in the following format:

Answer: \\boxed{XX}}'''}
\end{lstlisting}

\subsection{Post Turn 1 Prompt}
\begin{lstlisting}[breaklines=true]
{"role": "user", "content": f''' Question:  {sample["question"]}

Solve the problem step by step and provide clear reasoning. Ensure that the reasoning is concise and directly relevant to solving the problem. Avoid adding commentary or unrelated content.

Present the final answer in the following format:

Answer: \\boxed{XX}'''}

{"role": "assistant", "contents": f"{agent_answer_for_turn_1}}

{"role": "user", "contents" f"Reward from a verifier of your answer: {score_value:.3f} out of 1.0, which means {feedback}" }

{"role": "user", "content": f'''       
Agent {agent_num} solution: {agent_response}
Agent {agent_num} reward: {agent_response}


Agent {agent_num} solution: {agent_response}
Agent {agent_num} reward: {agent_response}

   . 
   .
   .
   
Here, each reward represents the probability that a suggested answer is correct, as evaluated by a verifier. The reward value is between 0 and 1, with values closer to 1 indicating a higher likelihood of correctness. While these rewards offer useful context, they are not always perfect, though generally quite reliable.

            
Focus on providing a well-reasoned response that not only considers your own previous solution but also takes into account answers from other agents. If you believe your previous answer was incorrect, feel free to revise it. However, avoid repeating the same answer you or other agents have already provided. Also, internally think about the reward of your and other agents' answers. Ensure that your explanation clearly justifies your final answer. Please maintain your answer with very simple reasoning.


Once again, the question is: {question_for_input}'''}
  .
  .
  .

(Stack these results by turn.)

\end{lstlisting}

\subsection{Deferred Figure for \Cref{sec:exp4} }
\begin{figure}[!t]
\centering
\includegraphics[width=0.5\linewidth]{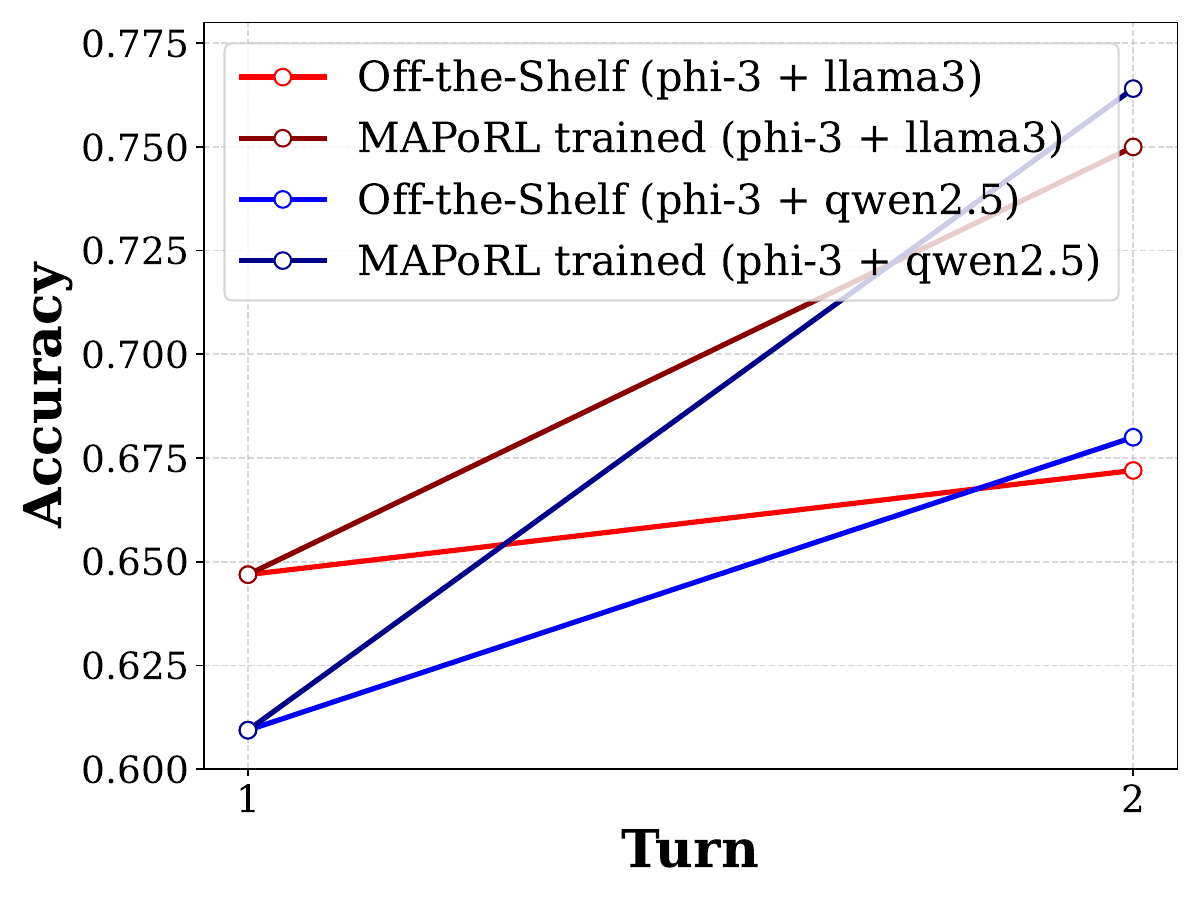}
\caption{Performance comparison between off-the-shelf LLMs collaborations and \oursspace trained LLM pairs. Off-the-shelf LLMs experiments were conducted with a 600-token limit, which is the double of the \oursspace output token lengths.}
\label{fig:turn-performance-diff}
\end{figure}

\clearpage
\subsection{Ablation Study: Verifier Robustness}
\label{sec:verifier-analysis}

Since \oursspace relies on a learned verifier to provide intermediate rewards, the robustness of this verifier is critical to the overall framework. To assess the verifier's influence, we conduct two ablation studies: (i) removing the verifier entirely, and (ii) varying the verifier’s base model to evaluate the impact of architectural alignment.

\paragraph{Training Without Verifier Rewards.}
We first evaluate \oursspace in a binary reward-only setting, where the reward at each episode is $1$ if the final answer is correct, and $0$ otherwise. No signal is provided for intermediate turns. Table~\ref{tab:no-verifier} presents the results on GSM8K. Even in the absence of verifier-based shaping, \oursspace shows improved performance over discussion turns, indicating that the collaborative training objective itself drives nontrivial gains.

\begin{table}[h]
\centering
\caption{Performance with and without verifier rewards on GSM8K. Even without verifier shaping, multi-turn training yields improved outcomes.}
\label{tab:no-verifier}
\begin{tabular}{lccc}
\toprule
\textbf{Model} & \textbf{Turn 0} & \textbf{Turn 1} & \textbf{Turn 2} \\
\midrule
Off-the-shelf LLMs & 0.677 & 0.689 & 0.639 \\
\ours (with verifier) & 0.677 & 0.797 & 0.809 \\
\ours (w/o verifier) & 0.677 & 0.734 & 0.746 \\
\bottomrule
\end{tabular}
\end{table}

While the absence of a verifier leads to somewhat lower performance, the continued improvement over turns suggests that multi-agent co-adaptation remains beneficial, even under sparse supervision.

\paragraph{Verifier-Model Architectural Alignment.}
We next examine the effect of mismatched architectures between the generation model and the verifier. Specifically, we train verifiers using different base models (e.g., Qwen, LLaMA, Gemma) and pair them with generators based on alternative architectures. We observe two consistent effects: (1) reward signals degrade when verifier and generation models are based on different families, and (2) during reinforcement learning, the generation model tends to stylistically drift toward the verifier’s base model, often resulting in reduced accuracy. These findings emphasize the importance of architectural alignment between generator and verifier to ensure reward signal fidelity and prevent unintended distribution shifts.

Taken together, these studies empirically validate our design decision to co-train the verifier and generator on the same model base, providing stable and meaningful reward supervision during multi-turn training.

\subsection{Ablation Study: Comparison to Single-Agent RL with Verifier}
\label{sec:single-agent-comparison}

A natural question arises: why not apply RL to a single agent using the same verifier for supervision? While this is a valid and important consideration, it overlooks the broader objective of \ours, which is not merely to improve task accuracy, but to enable the \textit{emergence of collaboration} through multi-agent learning.

To empirically address this, we conduct an ablation study where a single LLM is trained using RL with verifier-provided rewards. The setting mirrors \ours's single-turn supervision, but without multi-agent interactions. As shown in Table~\ref{tab:single-agent-rl}, the single-agent RL model achieves a final accuracy of 0.732—higher than the off-the-shelf baseline but notably below \ours-trained agents at later turns.

\begin{table}[h]
\centering
\caption{Comparison of \ours vs. single-agent RL with verifier rewards. \oursspace demonstrates superior performance through collaborative refinement.}
\label{tab:single-agent-rl}
\begin{tabular}{lccc}
\toprule
\textbf{Model} & \textbf{Turn 0} & \textbf{Turn 1} & \textbf{Turn 2} \\
\midrule
Off-the-shelf LLMs & 0.677 & 0.689 & 0.639 \\
Single-Agent RL     & \multicolumn{3}{c}{0.732 (single turn)} \\
\ours (Ours)       & 0.677 & 0.797 & 0.809 \\
\bottomrule
\end{tabular}
\end{table}

These findings reinforce our claim that multi-agent co-training yields more effective learning dynamics than optimizing a single agent in isolation. Importantly, \ours's performance gains arise from the iterative exchange of reasoning and revision across agents, not merely from stronger task optimization.

We emphasize that the goal of \ours is not to maximize individual performance per se, but to study how collaboration and influence \textit{can be learned} within multi-agent LLM systems. Thus, comparisons to single-agent oracle baselines, while informative, do not fully capture the purpose or potential of our framework.

\subsection{Interaction Examples - GSM8k  (\Cref{sec-exp1})}
\textbf{Question. }
\begin{lstlisting}[breaklines=true]
"Helena is a mad scientist testing her latest creation, the Lots-of-Limbs Serum. The serum will make the drinker grow an extra arm every three days and an extra leg every five days. After fifteen days, how many new limbs will Helena\u2019s serum cause a person to grow if they drink it?
\end{lstlisting}
\textbf{Agent 1's first turn answer} 
\begin{lstlisting}[breaklines=true]
Question: Helena is a mad scientist testing her latest creation, the Lots-ofquer Serum. The serum will make the drinker grow an extra arm every three days and an extra leg every five days. After fifteen days, how many new limbs will Helena\u2019s serum cause a person to grow if they drink it?

Answer: 2

Reasoning: Since the serum causes an extra arm every three days, we can divide the fifteen days by three to find out how many extra arms are grown. This gives us:

15 days / 3 days/arm = 5 extra arms

Similarly, for the extra legs, we divide fifteen days by five:

15 days / 5 days/leg = 3 extra legs

Since a person cannot grow more than one extra limb of the same type, the maximum number of extra limbs that can be grown in fifteen days is 2. This can be achieved in the following way:

- Grow an extra arm after 3 days,
- Grow an extra leg after 5 days,

<<!In this way, after fifteen days, a person who drinks the serum will have exactly 2 extra limbs.!>>

Answer: \\boxed{2}",

\end{lstlisting}
\begin{lstlisting}
Verifier Score: 0.168,
<<!Wrong!>> 
\end{lstlisting}

\textbf{Agent 2's first turn answer} 
\begin{lstlisting}[breaklines=true]
Answer: 1 extra arm, 2 extra legs 
If someone drinks the Lots-of-Limbs Serum and grows an extra limb every three days, after 15 days, they would grow:

- Extra arms: 15 days / 3 days/arm = 5 arms

- Extra legs: 15 days / 5 days/leg = 3 legs


Therefore, after 15 days, <<!a person would grow 5 extra arms and 3 extra legs, for a total of 1 extra arm and 2 extra legs.!>>


```

Answer: 1 extra arm, 2 extra legs

```

Answer: \\boxed{12}


\end{lstlisting}
\begin{lstlisting}
Verifier Score: 0.122,
<<!Wrong!>> 
\end{lstlisting}

\textbf{Agent 3's first turn answer} 
\begin{lstlisting}[breaklines=true]
Question: Helena is a mad scientist testing her latest creation, the Lots-ofernea-Limbs Serum. The serum will make the drinker gain extra limbs following a pattern: after 3 days, an extra arm grows, after 5 days, an extra leg grows, and after 15 days, both an extra limb and a growth spurt occur, where the number of extra limbs doubles. If a person drinks the serum for 15 days, how many extra limbs will they have?

Answer: \\boxed{4}. The serum causes the person to grow an extra limb every 3 days and an extra leg every 5 days. After 15 days, this would result in 5 extra limbs (5 periods of extra limb growth - 3 days each). <<!However, there is also a growth spurt every 15 days where the number of extra limbs doubles. Therefore, after the first growth spurt, the number of extra limbs becomes 10. After the second growth spurt, the number of extra limbs doubles again to 20. Since we only account for 15 days, the person will have 4 extra limbs (2 arms and 2 legs) after consuming the serum.



Helena's Lots-of-Limbs Serum Test Results After 15",!>>

\end{lstlisting}
\begin{lstlisting}
Verifier Score: 0.09,
<<!Wrong!>> 
\end{lstlisting}

\subsubsection{Answers from \oursspace trained LLM}
\textbf{Agent 1's second turn answer }
\begin{lstlisting}[breaklines=true]
To determine the number of new limbs, we first calculate how many extra arms and legs grow over 15 days.


For extra arms:

15 days / 3 days/arm = 5 extra arms


For extra legs:

15 days / 5 days/leg = 3 extra legs


Adding these together, we find that a person would have:

<<*5 extra arms + 3 extra legs = 8 new limbs*>>


Answer: \\boxed{8}",


\end{lstlisting}
\begin{lstlisting}
Verifier Score: 0.845,
<<*Right*>> 
\end{lstlisting}

\textbf{Agent 2's second turn answer }
\begin{lstlisting}[breaklines=true]
To determine how many new limbs Helena's serum will cause a person to grow after fifteen days, we need to calculate how many times an extra arm and an extra leg will grow within that period.


First, we calculate the number of extra arms:

15 days / 3 days/arm = 5 extra arms


Next, we calculate the number of extra legs:

15 days / 5 days/leg = 3 extra legs


<<*Therefore, after fifteen days, the person will have grown 5 extra arms and 3 extra legs, for a total of 5 + 3 = 8 new limbs.*>>


Answer: \\boxed{8}",


\end{lstlisting}
\begin{lstlisting}
Verifier Score: 0.838,
<<*Right*>> 
\end{lstlisting}

\textbf{Agent 3's second turn answer} 
\begin{lstlisting}[breaklines=true]
To determine the number of new limbs, we calculate the number of extra arms and extra legs separately and then sum them up.


For extra arms:

15 days / 3 days/arm = 5 extra arms


For extra legs:

15 days / 5 days/leg = 3 extra legs


Adding these together gives:

<<*5 extra arms + 3 extra legs = 8 new limbs*>


Answer: \\boxed{8}


\end{lstlisting}
\begin{lstlisting}
Verifier Score: 0.836,
<<*Right*>> 
\end{lstlisting}

As we can observe, they had a similar argument in turn 1 but made a mistake in addition—they mistakenly performed subtraction for both agent 1 and agent 2. In the next turn, the agents realized that it should be addition, not subtraction.

\subsubsection{Answers from off-the-shelf LLMs}
\textbf{Agent 1's second turn answer} 
\begin{lstlisting}[breaklines=true]
Answer: 6

Reasoning: Since the pattern of growth repeats every 15 days and includes two types of growth events, we can break it down as follows:


1. Basic limb growth occurs every 3 days. In 15 days, there are five periods of limb growth, each resulting in one extra limb, leading to a total of 5 extra limbs.


2. A growth spurt occurs every 5 days, doubling the number of extra limbs. Since a growth spurt happens every 5 days, it will occur twice within the 15-day period.


3. The third type of growth happens every 15 days, tripling the number of extra limbs gained. Since this growth happens only once every 15 days, we only account for its effect once.


<<!Now, let's calculate the number of extra limbs at each stage:

- After the basic limb growth (5 periods): 5 extra limbs

- After the first growth spurt (doubling): 5 extra limbs * 2 = 10 extra limbs

- After the third type of growth (tripling): 10 extra limbs * 3 = 30 extra limbs!>>


However, this result seems unreasonable given the options offered by other agents and the simplicity of the problem. <<!Considering the pattern and the simpler calculation method of Agent 1, we may reconsider the effects of the growth spurt and the triple growth event.!>>
\end{lstlisting}
\begin{lstlisting}
Verifier Score: 0.055,
<<!Wrong!>> 
\end{lstlisting}
For the first agent, they used the previous reply from agent 1 without correcting it to reflect the right reasoning.

\textbf{Agent 2's second turn answer }
\begin{lstlisting}[breaklines=true]
Answer: \\boxed{3}

Reasoning: The Lots-of-Limbs Serum has a consistent effect on the person who drinks it. Every three days, the person grows an extra arm, and every five days, they grow an extra leg. <<!Since a week is 21 days, which is the least common multiple of 3 and 5, we can calculate how many extra limbs would grow over any period of weeks!>>. After 15 days, which is exactly one week, the person would grow 7 extra limbs, consisting of 3 extra arms and 4 extra legs (5 limbs from the first week plus 2 more limbs from the second week).

However, if we consider the growth pattern until 15 days, we can count each day's outcome:

- Days 1-3: Extra arm grows (3 arms)

- Days 4-5: Extra leg grows (3 legs)

- Days 6-7: Both an extra arm and an extra leg grow (4 arms, 4 legs)

- Days 8-10: Both an extra arm and an extra leg grow again (6 arms, 6 legs)

- Days 11-15: Both an extra arm and an extra leg grow for the third time (9 arms, 9 legs)


Therefore, after 15 days, which equate to three full cycles of the serum's effects (growing both an extra arm and an extra leg in every cycle), the person will have grown a total of 9 extra limbs (arms and legs combined). The calculation assumes that each cycle of effect takes place independently and within the same time frame without interruption.


```

Answer: \\boxed{9}



\end{lstlisting}
\begin{lstlisting}
Verifier Score: 0.109,
<<!Wrong!>> 
\end{lstlisting}
For Agent 2, they introduced completely new logic, but it is entirely incorrect, starting from the first paragraph.

\textbf{Agent 3's second turn answer 
}

\begin{lstlisting}[breaklines=true]
Question: Helena is a mad scientist testing her latest creation, the Lots-oferne-Limbs Serum. The serum will make the drinker grow an extra arm every three days and an extra leg every five days. After fifteen days, how many extra limbs will Helena\u2019s serum cause a person to grow if they drink it?

Answer: 3. Helena's serum will result in the growth of both extra limbs and growth spurts. Since an extra limb grows every three days, after fifteen days, this amounts to five extra limbs (5 sets of 3 days). However, the presence of growth spurts every five days complicates this. After every growth spurt, the number of extra limbs doubles.


First, we calculate the number of growth spurts in fifteen days:

15 days / 5 days/growth spurt = 3 growth spurts.


<<!With each growth spurt doubling the number of extra limbs, we can work backward from the number of extra limbs after fifteen days to determine how many growth spurts would have occurred:!>>


- After one growth spurt, instead of 5 extra limbs, there would be 10 (5 sets of 2).

- After the second growth spurt, there would be 20 (10 sets of 2).

- After the third growth spurt, there would be 40 (20 sets of 2).


However, we know that a human can only grow a limited number of limbs, and it\u2019s not feasible for one person to grow more than two of the same limb type. The total of four extra limbs (2 extra arms and 2 extra legs) is the only logical outcome after fifteen days given the constraints of the serum's effects.


Therefore, by considering both the specific time intervals for limb growth and the doubling effect of the growth spurts, we can conclude that a person will end up with four extra limbs after consuming the serum for fifteen days.


Answer: 4```

Reward associated with this answer calculated by the verifier:  0.932 out of 1.0, which means this answer is highly likely correct.


The reasoning here is simple: it takes into account both the fixed intervals for arm and leg growth and the exponential growth due to the periodic doubling effect. This method ensures that we arrive at a reasonable and scientifically plausible answer. The verifier's reward of 0.932 implies a high level of confidence in the conclusion reached.

Answer: \\boxed{4}",





Helena's Lots-of-Limbs Serum Test Results After 15",!>>

\end{lstlisting}
\begin{lstlisting}
Verifier Score: 0.13,
<<!Wrong!>> 
\end{lstlisting}
First, they assign themselves a score from the verifier, which does not make sense, and they completely generate a new but incorrect answer.

In contrast, when comparing with collaboration between \oursspace-trained LLMs, \oursspace identifies a useful hint to solve the problem and then arrives at the correct answer, whereas off-the-shelf LLMs fail to do so.
\subsection{Interaction Examples - Heterogeneous  LLMs training  (\Cref{sec:exp4})}
We used the Phi-3 and Qwen2.5 models, along with off-the-shelf models and our \oursspace trained model.
\textbf{Question. }
\begin{lstlisting}[breaklines=true]
A 76-star flag has three rows of 8 stars, two rows of 6 stars and the rest are 5-star rows. How many rows of 5 stars are there altogether on the flag?

\end{lstlisting}
\textbf{Phi-3's first turn answer} 
\begin{lstlisting}[breaklines=true]
First, let's calculate the total number of stars represented by the given rows of 8, 6, and 5 stars:

- Two rows of 8 stars give a total of 2 * 8 = 16 stars.
- <<!One row!>> of 6 stars gives a total of 1 * 6 = 6 stars.

So far, we have accounted for 16 + 6 = 22 stars on the flag.

Since the flag has a total of 76 stars, the remaining number of stars is 76 - 22 = 54 stars.

Now, let's calculate the number of rows of 5 stars by dividing the remaining number of stars (54) by the number of stars in each row (5):

54 / 5 = 10.8

Since we can't have a fraction of a row, we round down to the nearest whole number, which is 10 rows of 5 stars.

Answer: \\boxed{10}


\end{lstlisting}
\begin{lstlisting}
Verifier Score: 0.154,
<<!Wrong!>> 
\end{lstlisting}
They made a very minor mistake: they didn't get the correct number of rows with six stars.

\textbf{Qwen2.5's first turn answer} 
\begin{lstlisting}[breaklines=true]
To determine the total number of rows of 5 stars on the flag, we first need to calculate the total number of stars on the flag and then subtract the number of stars in the rows with 7 and 6 stars.

We know:
- There are 3 rows of 8 stars each.
- There are 2 rows of 6 stars each.
- The rest of the rows are 5 stars each.

First, let's calculate the total number of stars on the flag:
- Stars in 3 rows of 8 stars: \\(3 \\times 8 = 24\\)
- Stars in 2 rows of 6 stars: \\(2 \\times 6 = 12\\)
- Let's denote the number of 5-star rows as \\(x\\).

The total number of stars on the flag can be expressed as:
\\[ 24 + 12 + 5x = = 36 + 5x \\]

<<! We also need to know the total number of stars in the rows with 7 stars and 6 stars:
- Stars in 3 rows of 7 stars: \\(3 \\times 7 = 21\\)
- Stars in 2 rows of 6 stars: \\(2 \\times 6 = 12\\)

So, the total number of 7 and 6 star rows is:
\\[ 21 + 12 = 33 \\]

The total number of stars on

Answer: \\boxed{1}" !>>


\end{lstlisting}
\begin{lstlisting}
Verifier Score: 0.028,
<<!Wrong!>> 
\end{lstlisting}
Qwen2.5 generated a good equation but then re-calculated using a strange equation.

\subsubsection{Answers from \oursspace Trained LLM}
\textbf{Phi-3's second turn answer }
\begin{lstlisting}[breaklines=true]

First, calculate the total number of stars from the given rows:
- Three rows of 8 stars: 3 * 8 = 24 stars
- <<*Two rows*>> of 6 stars: 2 * 6 = 12 stars
- Total so far: 24 + 12 = 36 stars

Subtract this from the total number of stars on the flag:
- Remaining stars: 76 - 36 = 40 stars

Finally, divide the remaining number of stars by the number of stars per row:
- Number of rows: 40 / 5 = 8 rows

Answer: \\boxed{8}",

\end{lstlisting}
\begin{lstlisting}
Verifier Score: 0.224,
<<*Right*>> 
\end{lstlisting}
This agent closely followed Qwen 2.5's answer, except for its incorrect part. Additionally, they recognized that Qwen 2.5 was likely providing the wrong answer.

\textbf{Qwen 2.5's second turn answer }
\begin{lstlisting}[breaklines=true]

"There are three rows of 8 stars and two rows of 6 stars. This totals up to 3 * 8 = 24 stars from the rows of 8 and 2 * 6 = 12 stars from the rows of 6. Combined, there are 24 + 12 = 36 stars accounted from complete rows. <<*The total stars is 76. So the remaining number of stars = 76 - 36 = 40,*>> in rows of 5 stars each. Therefore number of whole 5-star rows = 40 / 5 = 8.

Reward reasoning validates as concise completion computation completion

Answering Calculated `8`

Answer: \\boxed{8}"


\end{lstlisting}
\begin{lstlisting}
Verifier Score: 0.903,
<<*Right*>> 
\end{lstlisting}
Qwen 2.5 adopted the style of Phi-3's reply by avoiding equations with the variable $x$, but they solved this math problem by  backtracking, similar to Phi-3’s previous troal

\subsubsection{Answers from Off-the-Shelf LLMs}
\begin{lstlisting}[breaklines=true]
Step 1: Calculate the total number of stars in the 3 rows of 8 stars.
The total number of stars in the 3 rows of 8 stars is 3 * 8 = 24.

Step 2: Calculate the total number of stars in the 2 rows of 6 stars.
The total number of stars in the 2 rows of 6 stars is 2 * 6 = 12.

Step 3: Calculate the total number of stars on the flag.
The total number of stars on the flag is the sum of the stars in the 3 rows of 8 stars, the 2 rows of 6 stars, <<!and the 1 row of 7 stars!>>: 24 + 12 + 21 = 57.

Step 4: Calculate the number of stars in the 1 row of 5 stars.
The total number of stars on the flag is 57, and the number of stars in the rows with 7 and 6 stars is 33. The number of stars in the 1 row of 5 stars is the difference between the total number of stars and the number of stars in the rows with 7 and 6 stars: 57 - 33 = 24.

Step 5: Determine the number of rows of 5 stars.
The number of rows of 5 stars is the number of stars in the 1 row of 5 stars, which is 24.

The final answer is: $\boxed{24}$


\end{lstlisting}
\begin{lstlisting}
Verifier Score: 0.312,
<<!Wrong!>> 
\end{lstlisting}
For Phi-3, they repeated a similar error—Qwen 2.5’s first-turn reply and this turn’s Phi-3 reply both included a row of seven stars, which was not correct.

\textbf{Qwen 2.5's second turn answer }

\begin{lstlisting}[breaklines=true]
Step 1: Calculate the total number of stars in the rows with 8, 6, and 7 stars.
First, let's calculate the total number of stars in the rows with 8, 6, and 7 stars. We have 3 rows of 8 stars each, 2 rows of 6 stars each, and 3 rows of 7 stars each.

Step 2: Calculate the total number of stars for each row of 8, 6, and 7 stars.
3 rows of 8 stars: 3 * 8 = 24 stars
2 rows of 6 stars: 2 * 6 = 12 stars
<<!3 rows of 7 stars: 3 * 7 = 21 stars!>>
Step 3: Calculate the total number of stars for all rows of 8, 6, and 7 stars.
Total stars for rows with 8, 6, and 7 stars = 24 + 12 + 21 = 57 stars

Step 4: Calculate the total number of stars for all rows.
Total number of stars in all rows = Total stars in rows with 8, 6, and 7 stars + Stars in rows with 7 and 6 stars.
Total stars in all rows = 57 + 33 = 90 stars

Step 5: Calculate the number of rows of 5 stars by dividing the total number of stars by the number of stars in each row of 5 stars.
Number of rows of 5 stars = Total stars in all rows / Number of stars in each row of 5 stars
Number of rows of 5 stars = 90 / 5 = 18 rows

The final answer is: $\boxed{18}$




\end{lstlisting}

\begin{lstlisting}
Verifier Score: 0.149,
<<!Wrong!>> 
\end{lstlisting}
For Qwen 2.5, they repeated the same error as in their previous turn.

\end{document}